\documentclass[12pt]{article}

\usepackage[left=1in,right=1in,top=1in,bottom=1in]{geometry}
\usepackage{authblk} % For author affiliations
\usepackage{hyperref} % For hyperlinks

\usepackage{amssymb,amsmath,makecell,epsfig,listings}
\usepackage{algorithm}% http://ctan.org/pkg/algorithms
\usepackage{algpseudocode}% http://ctan.org/pkg/algorithmicx
\usepackage{multirow}
\usepackage{makecell}
\usepackage{graphicx}
\usepackage{caption}
\usepackage{subcaption}
\usepackage{adjustbox}
\usepackage{listings}
\usepackage[framemethod=tikz]{mdframed}
\usepackage{booktabs}
\usepackage{listings}  % For better handling of code formatting
\usepackage{fancyvrb}  % For verbatim environments
\usepackage[T1]{fontenc}
\usepackage{listings}
\lstset{
    language=Python,
    basicstyle=\ttfamily,
    breaklines=true,
    breakatwhitespace=true,
    columns=flexible,
    showstringspaces=false,
    escapeinside={|}{|}
}

% Bibliography
\RequirePackage{natbib}
\bibliographystyle{apalike}

% Environment for proofs and theorems (if necessary)
\newtheorem{theorem}{Theorem}
\newenvironment{proof}[1][Proof]{\noindent\textit{#1.} }{\ \rule{0.5em}{0.5em}}

\DeclareMathOperator*{\argmin}{arg\,min}

\title{Natural Language-Based Synthetic Data Generation for Cluster Analysis}
\author[1]{Michael J. Zellinger\thanks{Corresponding author: \texttt{zellinger@caltech.edu}}}
\author[2]{Peter Bühlmann}
\affil[1]{Department of Computing \& Mathematical Sciences, California Institute of Technology, Pasadena, CA 91125, USA}
\affil[2]{Seminar for Statistics, ETH Zürich, Zürich, Switzerland}

\date{}

\begin{document}

\maketitle

\begin{abstract}
Cluster analysis relies on effective benchmarks for evaluating and comparing different algorithms. Simulation studies on synthetic data are popular because important features of the data sets, such as the overlap between clusters, or the variation in cluster shapes, can be effectively varied. Unfortunately, creating evaluation scenarios is often laborious, as practitioners must translate higher-level scenario descriptions like ``clusters with very different shapes'' into lower-level geometric parameters such as cluster centers, covariance matrices, etc. To make benchmarks more convenient and informative, we propose synthetic data generation based on direct specification of high-level scenarios, either through verbal descriptions or high-level geometric parameters. Our open-source Python package \href{https://repliclust.org}{\texttt{repliclust}} implements this workflow, making it easy to set up interpretable and reproducible benchmarks for cluster analysis. A demo of data generation from verbal inputs is available at \href{https://demo.repliclust.org}{\texttt{demo.repliclust.org}}.
\end{abstract}

\noindent \textbf{Keywords:} Synthetic Data, Simulation, Validation, Clustering, Unsupervised Learning, Artificial Intelligence, Natural Language Processing

\section{Introduction}
\label{sec:intro}

The goal of clustering is to separate data points into groups such that points within a group a more similar to each other than to those outside the group (\citealp{McCormackClassificationReview}). In practice, it is often not clear what constitutes a cluster (\citealp{HennigTrueClusters}). As a result, many practitioners evaluate cluster analysis algorithms on synthetic data (\citealp{Milligan1980}; \citealp{Milligan1983}; \citealp{gapstat}; \citealp{Steinley2008}; \citealp{Steinley2011}; \citealp{WhitePaper}). 

Synthetic data is valuable for two reasons. First, it clearly stipulates which data points belong to which cluster, allowing objective evaluation. Second, it allows independently manipulating different aspects of the data (such as the overlap between clusters or the variability of cluster shapes), which is critical for drawing scientific conclusions about the relative merits of different cluster analysis techniques (\citealp{Milligan1996}).

Unfortunately, setting up benchmarks with synthetic data can be laborious. The process typically involves creating data sets for a number of different scenarios. For example, on benchmarks with convex clusters drawn from probabilistic mixture models, the scenarios may involve ``clusters of very different shapes and sizes'', ``highly overlapping oblong clusters'', ``high-dimensional spherical clusters'', etc. (\citealp{gapstat}). 

Existing data generators do not cater directly to such high-level scenarios. Instead, the user must carefully tune simulation parameters to arrive at the desired scenarios (\citealp{oclus}; \citealp{elki}; \citealp{mdcgen}). While some generators make it easy to control the overlaps between clusters, such high-level control typically does not extend to other aspects like the desired diversity of cluster shapes and sizes (\citealp{qiujoe06}; \citealp{hawks}).

In this paper, we explore generating synthetic data directly from high level descriptions. Our Python package, \texttt{repliclust}, accomplishes this goal by summarizing the overall geometry of probabilistic mixture models with a few high-level parameters. We use a large language model to map a user's verbal description of a scenario onto these parameters. Although our approach is based on ellipsoidal clusters, we have implemented two post-processing functions for generating more irregular cluster shapes. The first makes clusters non-convex by passing them through a randomly initialized neural network. The second makes a $p$-dimensional data set \textit{directional} by wrapping it around the $(p+1)$-dimensional sphere through an inverse stereographic projection.

\section{Generating Data from High-Level Archetypes}
\label{sec:overview}

\begin{figure}[h]
\centering
    \includegraphics[width=\textwidth]{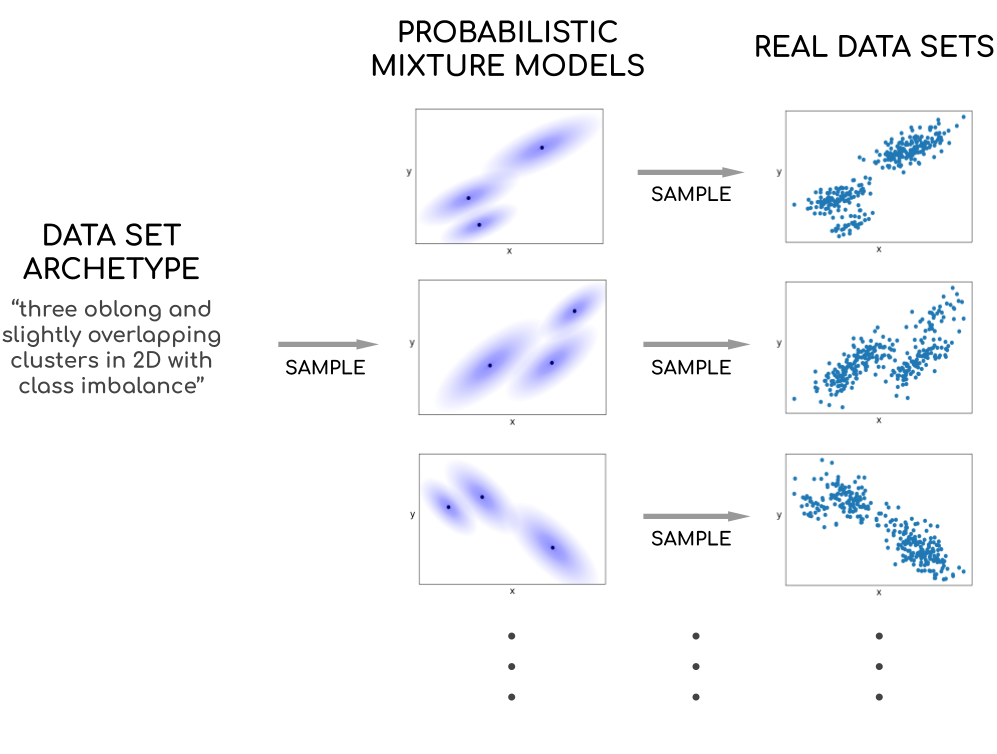}
    \caption{Illustration of synthetic data generation with data set archetypes. Left: the user specifies the desired archetype. The user can verbally describe the archetype in English or directly specify a few high-level geometric parameters. Middle: the archetype provides a random sampler for probabilistic mixture models with the desired geometric characteristics. Right: drawing i.i.d. samples from each mixture model yields synthetic data sets.}
    \label{fig:overview}
\end{figure}

Our data generator \texttt{repliclust} is based on data set archetypes. A \textit{data set archetype} is a high-level description of the overall geometry of a data set with clusters. For example, the class of all data sets with ``three oblong and slightly overlapping clusters in two dimensions with some class imbalance'' is a data set archetype.

We implement archetype-based generation by summarizing the overall geometry of probabilistic mixture models in terms of a few high-level parameters, as listed in Table \ref{tbl:archetype_params}. To create an archetype, users can either specify these parameters directly or verbally describe the archetype in English. To map an English description to a precise parameter setting, we use few-shot prompting of a large language model (\citealp{GPT3Paper}; \citealp{GPT4Report}).

\begin{table}[h!]
	\caption{Summary of high-level geometric parameters defining an \texttt{Archetype} in \texttt{repliclust}.}
  	\label{tbl:archetype_params}
  	\centering
  	\centerline{
  	\resizebox{\textwidth}{!}{
    \begin{tabular}{ | c | c |}
      \hline
      \textbf{Parameter(s)} & \textbf{Purpose} \\
      \hline
      \makecell{\texttt{n\_clusters} / \texttt{dim} / \texttt{n\_samples}} & \makecell{select number of clusters / dimensions / data points} \\
      \hline
      \makecell{\texttt{aspect\_ref} / \texttt{aspect\_maxmin}} & \makecell{determine how elongated vs spherical clusters \\ are / how much this varies between clusters} \\
      \hline
      \makecell{\texttt{radius\_maxmin}} & \makecell{determine the variation in cluster volumes} \\
      \hline
      \makecell{\texttt{max\_overlap} / \texttt{min\_overlap}} & \makecell{set maximum / minimum overlaps between clusters} \\
      \hline
      \makecell{\texttt{imbalance\_ratio}} & \makecell{make some clusters have more data points than others} \\
      \hline
      \makecell{\texttt{distributions} / \\ \texttt{distribution\_proportions}} & \makecell{select probability distributions appearing in each \\ data set / how many clusters have each distribution\footnote{Specifically, the proportion of clusters having a given distribution is the same in each data set. We are interested in adding a spread parameter that controls the variability of the distribution mix between data sets.}} \\
      \hline
    \end{tabular}
    }
    }
\end{table}

Most of the high-level geometric parameters describing an archetype are based on what we call ``max-min sampling.'' In this approach, the user controls a geometric attribute by specifying a \textit{reference value} and \textit{max-min ratio}. In addition, a constraint ensures that the reference value and is indeed typical for \textit{every} data set. For example, the aspect ratio of a cluster measures how elongated it is. The reference value \texttt{aspect\_ref} sets the typical aspect ratio among all clusters in a data set, while \texttt{aspect\_maxmin} sets the ratio of the highest to the lowest aspect ratio. To make sure that \texttt{aspect\_ref} is indeed the typical aspect ratio, max-min sampling enforces the location constraint $(\prod_{j=1}^k \alpha_j)^\frac{1}{k} = \texttt{aspect\_ref}$. Appendix A gives more details on how we manage different geometric attributes using max-min sampling.

Once an archetype has been defined, sampling concrete data sets proceeds in two steps. First, the algorithm samples a new probabilistic mixture model whose geometric structure matches the archetype. Second, we draw i.i.d. samples from this mixture model to generate a data set. Figure \ref{fig:overview} illustrates this flow.

To accommodate use cases in which variation of an archetype's hyperparameters (\texttt{n\_clusters}, \texttt{dim}, \texttt{n\_samples}) is desired , we have implemented a function \texttt{Archetype.sample\_hyperparams} for generating a list of archetypes with hyperparameters sampled from Poisson distributions centered on the original values (subject to rejection sampling based on user-specified minimum and maximum values).

\section{Sampling Probabilistic Mixture Models}
\label{sec:models}

Generating a synthetic dataset with \texttt{repliclust} starts with sampling a probabilistic mixture model that matches the desired archetype.

Sampling a mixture model proceeds through the following steps:
\begin{enumerate}
    \item Draw random cluster covariance matrices based on the archetype.
    \item Randomly initialize the cluster centers. Adjust their placement using stochastic gradient descent to meet desired constraints on the overlaps between clusters.
    \item Sample the number $n_j$ of data points per cluster, based on the extent of class imbalance specified by the archetype.
    \item Construct a data set $X$
    and cluster labels $y$ by sampling $n_j$ data points i.i.d. from the mixture component describing cluster $j$. Return $(X, y, \mathcal{A})$, where $\mathcal{A}$ is the archetype.
    \item Optionally, make cluster shapes non-convex by either \begin{enumerate}
                \item passing $X$ through a randomly
                initialized neural network (\texttt{repliclust.distort})
                \item wrapping $X \in \mathbb{R}^{n \times p}$ around the $(p+1)$-dimensional sphere to create directional data (\texttt{repliclust.wrap\_around\_sphere}).
            \end{enumerate}
\end{enumerate}

In the following sections, we give more details on each step involved in data generation.

\subsection{Defining Clusters and Mixture Models}

In the first four steps of data generation, \texttt{repliclust} models clusters as ellipsoidal probability distributions characterized by a central point. Specifically, each cluster $\mathcal{C}$ is defined by a cluster center $\mathbf{x}_\mathcal{C} \in \mathbb{R}^{p}$, orthogonal principal axes $\hat{\boldsymbol{u}}^{(1)}_\mathcal{C}, \hat{\boldsymbol{u}}^{(2)}_\mathcal{C}, ..., \hat{\boldsymbol{u}}^{(p)}_\mathcal{C}$ pointing in arbitrary directions, the lengths $\sigma^{(1)}_\mathcal{C}, \sigma^{(2)}_\mathcal{C}, ..., \sigma^{(p)}_\mathcal{C}$ of the principal axes, and a univariate probability distribution $f_\mathcal{C}(\cdot)$.

To generate an i.i.d. sample from a cluster, we 1) sample the direction $\hat{\mathbf{x}}$ from the ellipsoid defined by the cluster's principal axes, and 2) sample the length $||\mathbf{x}||_2$ according to the cluster's univariate distribution $f_\mathcal{C}$ (which can be one of many supported distributions including normal, lognormal, exponential, Student's t, gamma, chi-square, Weibull, Gumbel, F, Pareto, beta, and uniform). To make the spread of each cluster depend only on the lengths of the principal axes, we normalize each univariate distribution so that the $68.2\%$ quantiles of its absolute value is unity. For example, if the univariate distribution is exponential with rate $\lambda$, we would actually sample from a re-scaled random variable $\text{Exp}(\lambda) /q_{0.682}$, where the quantile $q_{0.682}$ satisfies $\mathbb{P}(|\text{Exp}(\lambda)| \leq q_{0.682}) = 0.682$. This rescaling puts all distributions on the same scale as the multivariate normal distribution, which natively satisfies $\mathbb{P}(|\mathcal{N}(0,1)| \leq 1) \approx 0.682$.

Figure \ref{fig:clusters}(a) visualizes clusters with different base distributions. Note that using heavy-tailed distributions can lead to far outliers (not shown). By contrast, univariate distributions with bounded support give rise to clusters with crisply defined boundaries. Stretches where a probability density function vanishes give rise to concentric holes.

\begin{figure}[h!]
    \begin{subfigure}{0.40\textwidth}
        \centering
        \includegraphics[width=\textwidth]{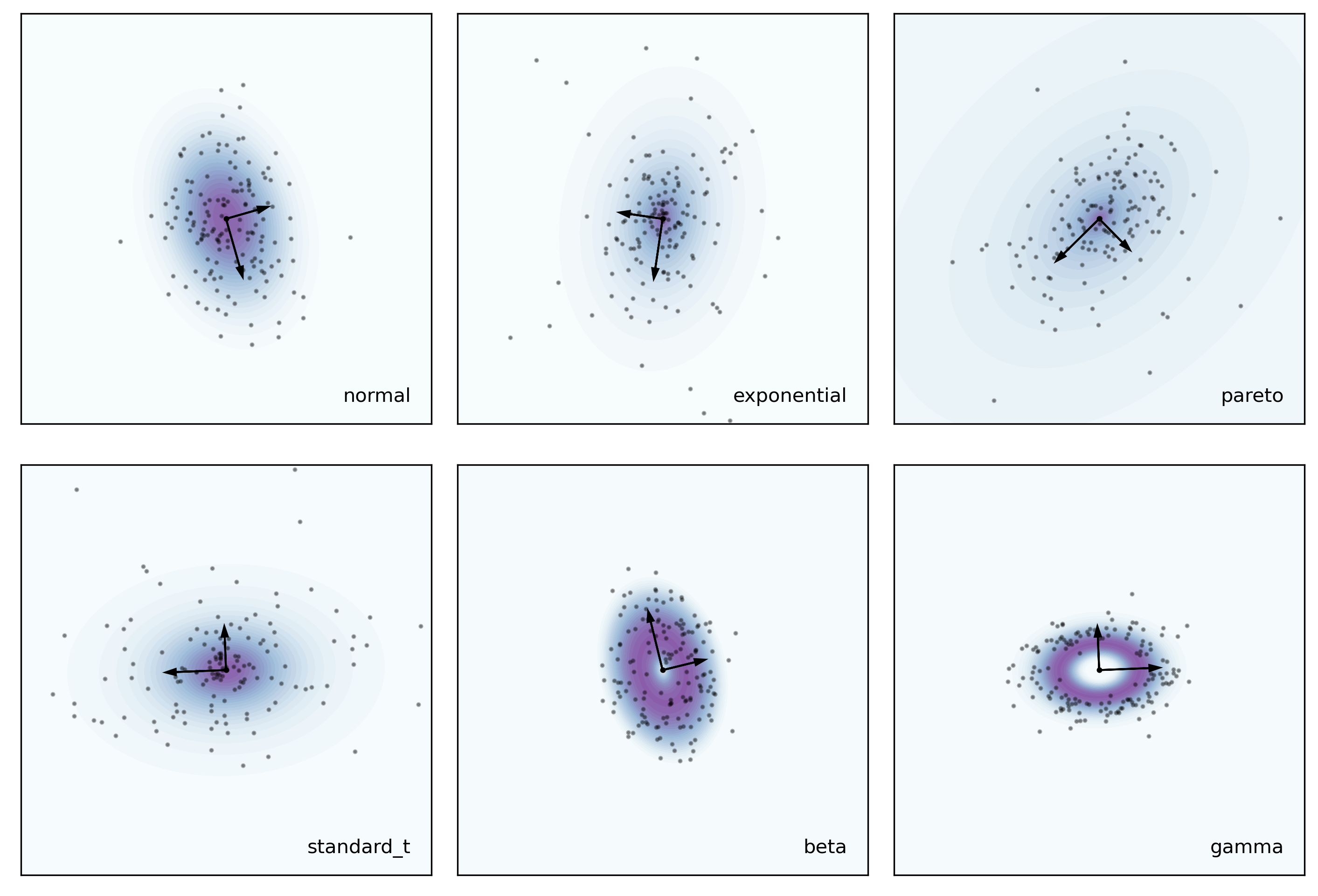}
        \caption{ }
    \end{subfigure}
    \hfill
    \begin{subfigure}{0.55\textwidth}
        \centering
        \includegraphics[width=\textwidth]{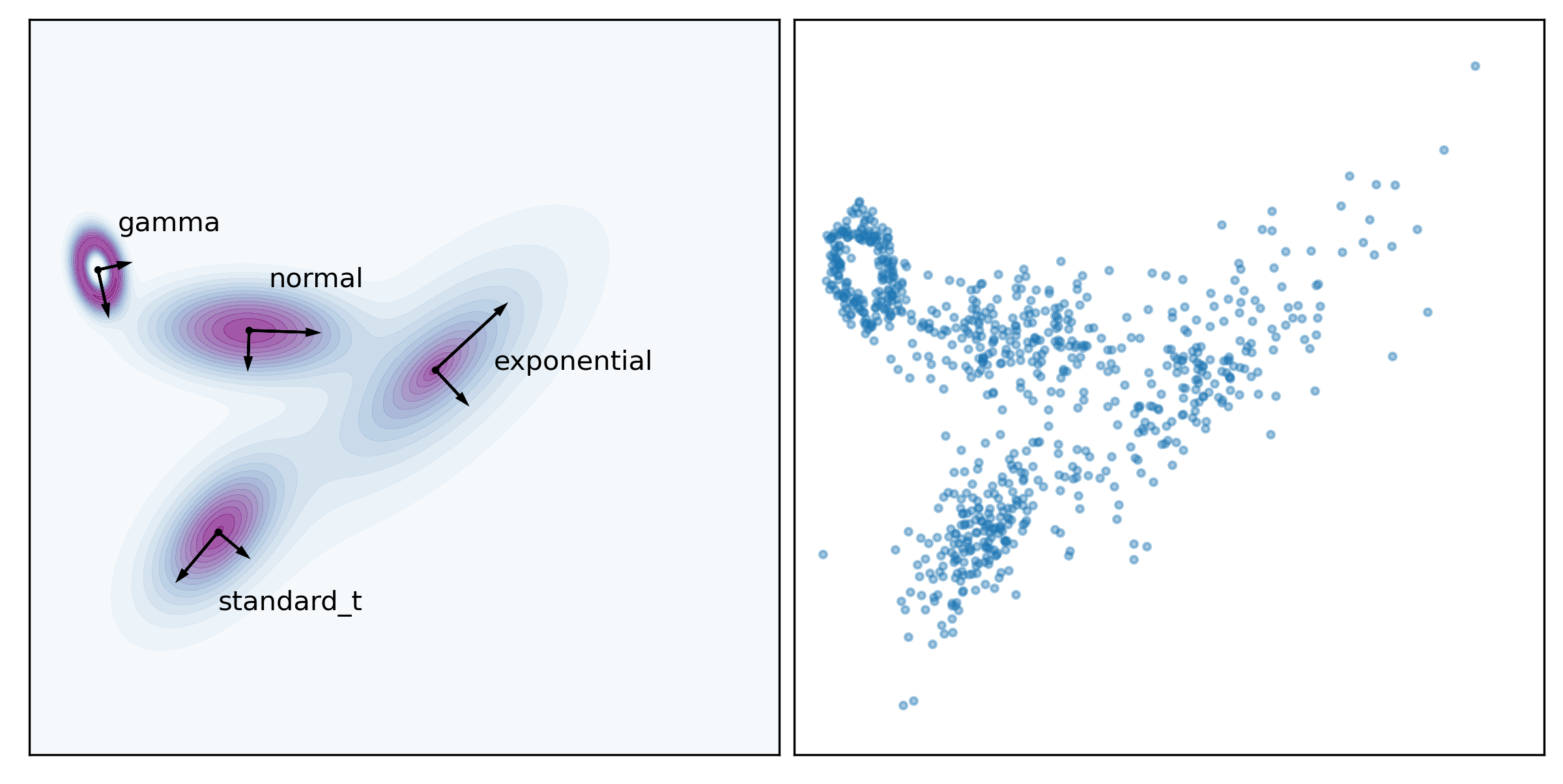}
        \caption{ }
    \end{subfigure}
    \caption{ Individual clusters (a) and a probabilistic mixture model (b) in \texttt{repliclust}. Black arrows show each cluster's principal axes. The scatter plot on the right in (b) shows a data set sampled from the mixture model. In this example, all clusters are natively 2D.}
    \label{fig:clusters}
\end{figure}

A mixture model (\texttt{repliclust.MixtureModel}) represents several clusters. Figure \ref{fig:clusters}(b) shows a two-dimensional mixture model with four clusters, and a data set sampled from it. Note that our mixture models do not model class probabilities. Instead, the number of data points per cluster are drawn using max-min sampling, based on the archetype's imbalance ratio (desired ratio between the number of data points in the most and least populous clusters). Appendix B lists the formal attributes of a mixture model.

\subsection{Managing Cluster Overlaps}
\label{sec:overlap}

Managing the degree of overlaps between clusters is one of the most important tasks of a cluster generator. In $\texttt{repliclust}$, we quantify pairwise overlap between two clusters as the irreducible error rate when classifying a new data point as belonging to one of the two clusters (assuming equal class probabilities). On the level of entire datasets, the user controls overlap by specifying the \textit{maximum} and \textit{minimum} pairwise overlap.

The maximum overlap $\texttt{max\_overlap}$ states that no pair of clusters may overlap more than a certain amount. By contrast, the minimum overlap $\texttt{min\_overlap}$ prevents isolated clusters by enforcing that each cluster has \textit{some} neighbor with which it overlaps \textit{at least} $\texttt{min\_overlap}$.
We construct a loss function that enforces the minimum and maximum overlap conditions and optimize it using stochastic gradient descent (SGD).

In this section, we first describe our notion of pairwise overlap. Second, we explain the loss function we use to enforce the maximum and minimum overlap for an entire dataset.

\subsubsection{Measuring Pairwise Overlap}

Viewing clusters as probability distributions, we formally define the overlap between two clusters as \textit{twice the minimax error rate when classifying new data points using a linear decision boundary between the two clusters}. Figure \ref{fig:overlap} illustrates this definition. To explain what we mean by ``minimax'' in this context, observe that any linear classifier $\hat{y}: \mathbb{R}^p \mapsto \{1,2\}$ depends on an axis $\boldsymbol{a} \in \mathbb{R}^{p}$ and threshold $c \in \mathbb{R}$ such that
\begin{equation}
\label{eq:defclassifier}
\hat{y}(\mathbf{x}) = \begin{cases}
			             1, & \text{if $\boldsymbol{a}^{\top} \mathbf{x} \leq c$}\\
                          2, & \text{if $\boldsymbol{a}^{\top} \mathbf{x} > c$}.
		             \end{cases}
\end{equation}
By definition, the \textit{minimax} classifier $\hat{y}^{*}$ minimizes the worst-case loss. In symbols,
\begin{equation}
\label{eq:defminimax}
    \max_{y} ~\mathbb{P}(y \neq \hat{y}^{*}(\mathbf{x}) | y) = \min_{\hat{y}} \max_{y} ~\mathbb{P}(y \neq \hat{y}(\mathbf{x}) | y),
\end{equation}
where $y \in \{1,2\}$ is the true cluster label corresponding to a new data point $\mathbf{x} \in \mathbb{R}^p$. The outer minimum on the right hand side ranges over all linear classifiers $\hat{y}$, including $\hat{y}^{*}$. Rewriting (\ref{eq:defminimax}) in terms of the classification axes $\boldsymbol{a}$ and thresholds $c$ yields
\begin{equation}
\label{eq:minimax}
    \boldsymbol{a}^{*},~c^{*} = \argmin_{\boldsymbol{a} \in \mathbb{R}^p,~c\in\mathbb{R}} ~\max \{~ \mathbb{P}( \boldsymbol{a}^{\top} \mathbf{x} > c ~|~ y=1),~\mathbb{P}( \boldsymbol{a}^{\top} \mathbf{x} \leq c ~|~ y=2) ~\}.
\end{equation}
It is not hard to see that the minimax condition requires the cluster-specific error rates $\mathbb{P}( \boldsymbol{a^{*}}^{\top} \mathbf{x} > c^{*} ~|~ y=1)$ and $\mathbb{P}( \boldsymbol{a^{*}}^{\top} \mathbf{x} \leq c^{*} ~|~ y=2)$ to be equal. Consequently, the cluster overlap $\alpha$ becomes
\begin{equation}
\label{eq:overlap}
\alpha = 2 ~ \mathbb{P}( \boldsymbol{a^{*}}^{\top} \mathbf{x} > c^{*} ~|~ y=1).
\end{equation}
Geometrically, our definition means that two clusters overlap at level $\alpha$ if their marginal distributions along the minimax classification axis $\boldsymbol{a}^{*}$ intersect at the $1-\alpha/2$ and $\alpha/2$ quantiles. The left panel of Figure \ref{fig:overlap} highlights the probability mass bounded by these quantiles in gray. 

Note that our formulation of \textit{minimax} classification error explicitly does not take into account class probabilities for the clusters. Equation (\ref{eq:defminimax}) depends only on the class-conditional probabilities. 
%Why do we believe that this formulation is advantageous? 
Our reasoning is that the underlying reality of each cluster depends on its class-conditional probability distribution, not the class probability.  

\cite{oclus} quantify cluster overlap by computing the full distributional overlap in $p$ dimensions. We prefer our one-dimensional notion in terms of minimax classification error, since high-dimensional Euclidean geometry exhibits a number of counterintuitive phenomena. Specifically, as $p \rightarrow \infty$, the majority of a sphere's volume becomes concentrated in a thin shell from its surface, and so $p$-dimensional overlap goes to zero unless the marginal overlaps in each dimension approach 100\% (\citealp{BlumHD}; \citealp{oclus}). 

\begin{figure}[h!]
\centering
    \includegraphics[width=0.8\textwidth]{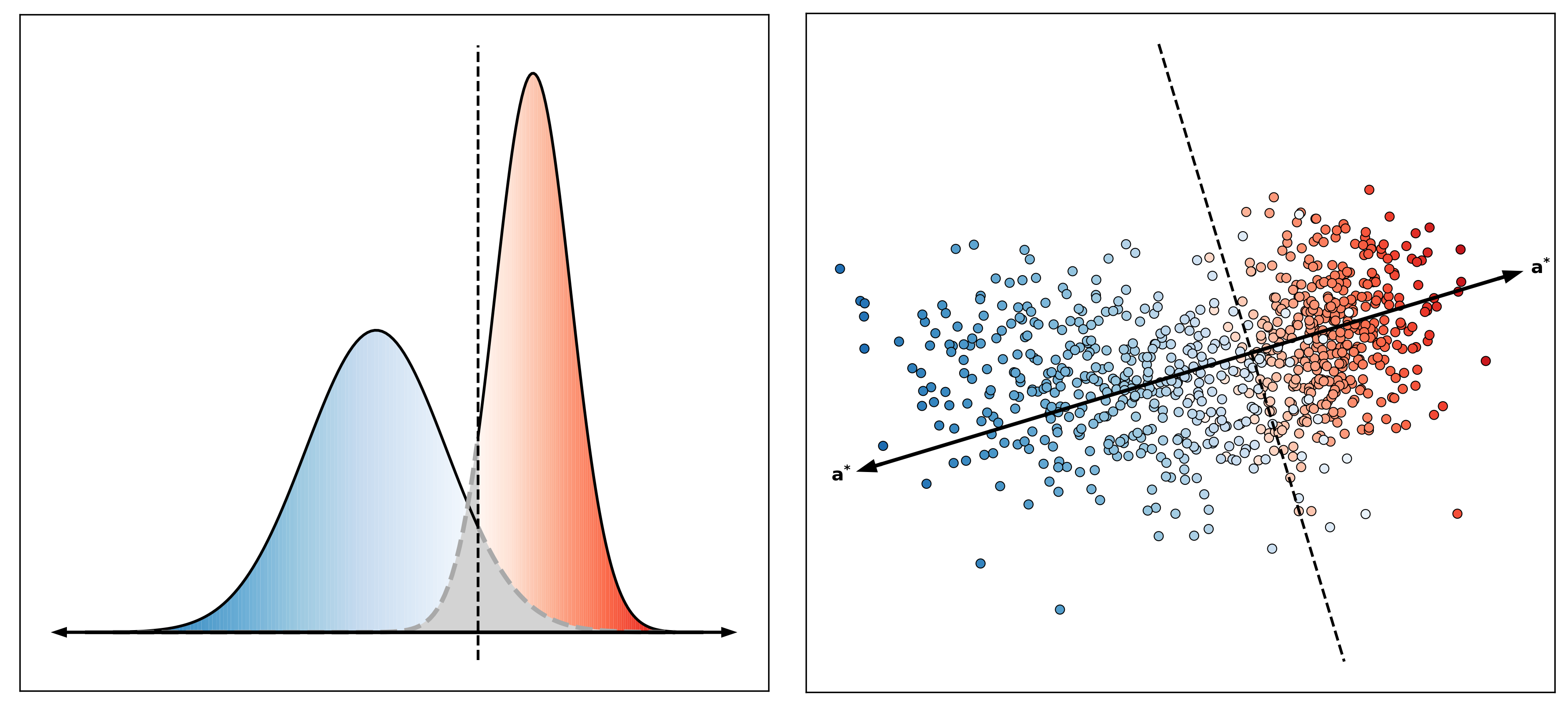}
    \caption{Cluster overlap based on the misclassification rate of the best linear classifier, in 1D (left) and 2D (right). The black dashed lines show the decision boundaries corresponding to minimax classification rules between the blue and red clusters, and the gray shaded areas represent classification errors. Cluster overlap $\alpha$ is the total probability mass of the gray areas. Here, $\alpha = 14.7\%$ for both the left and right panels.}
    % \caption{Cluster overlap based on the misclassification rate of the best linear classifier, in a minimax sense. The left panel shows the densities of two normal distributions in 1D. A minimax classification rule implies that the conditional misclassification rates for the blue and red distributions are equal. The black dashed line is the resulting decision boundary. The gray shaded areas on each side of this line represent classification mistakes. When viewed as one-dimensional clusters, the overlap is the total probability mass in the gray areas. The right panel shows two multivariate normal clusters in 2D. The black dashed line is the decision boundary based on the minimax classification axis $\boldsymbol{a}^{*}$. The cluster overlap is $\alpha = 14.7\%$.}
    \label{fig:overlap}
\end{figure}

Computing our cluster overlap $\alpha$ requires finding the minimax classification axis $\boldsymbol{a}^{*}$. \cite{bahadur} describe an algorithm for computing $\boldsymbol{a}^{*}$ exactly, in the case of multivariate normal distributions. Unfortunately, their method requires computing the matrix inverse $(t\boldsymbol{\Sigma}_1 + (1-t)\boldsymbol{\Sigma}_2)^{-1}$ for $\text{O}(\log(1/\epsilon)$ values of $t$, where $\boldsymbol{\Sigma}_1, \boldsymbol{\Sigma}_2$ are the clusters' covariance matrices and $\epsilon$ is a numerical tolerance. To avoid these matrix inversions, we propose an approximation of the minimax classification axis based on linear discriminant analysis (LDA).

For a pair of multivariate normal clusters with means $\boldsymbol{\mu}_1 \neq \boldsymbol{\mu}_2$ and the same covariance matrix $\boldsymbol{\Sigma}$, the axis  $\boldsymbol{a}_\text{\tiny{LDA}} = \boldsymbol{\Sigma}^{-1}(\boldsymbol{\mu}_2 - \boldsymbol{\mu}_1)$ minimizes the overall misclassification rate, assuming equal class probabilities (see \citealp{ESL}). Unfortunately, the result does not hold for unequal covariance matrices $\boldsymbol{\Sigma}_1 \neq \boldsymbol{\Sigma}_2$. Thus, we propose the approximation $\boldsymbol{a}_\text{\tiny{LDA}} = (\frac{\boldsymbol{\Sigma}_1 + \boldsymbol{\Sigma}_2}{2})^{-1}(\boldsymbol{\mu}_2 - \boldsymbol{\mu}_1)$. Figure \ref{fig:approximate_overlap} verifies that this LDA-based approximation closely matches the exact overlap, as compared with a simpler ``center-to-center'' (C2C) approximation that uses the difference between the cluster centers as the classification axis (i.e., $\mathbf{a}_{\tiny{C2C}} = \boldsymbol{\mu}_2 - \boldsymbol{\mu}_1$). In the figure, each data point corresponds to a pair of multivariate normal clusters with the pairwise overlap shown. The full simulation is based on 900 pairs generated from a variety of data set archetypes with different cluster shapes and numbers of dimensions (see code repository for implementation details).

\begin{figure}[h!]
\centering
    \includegraphics[width=0.47\textwidth]{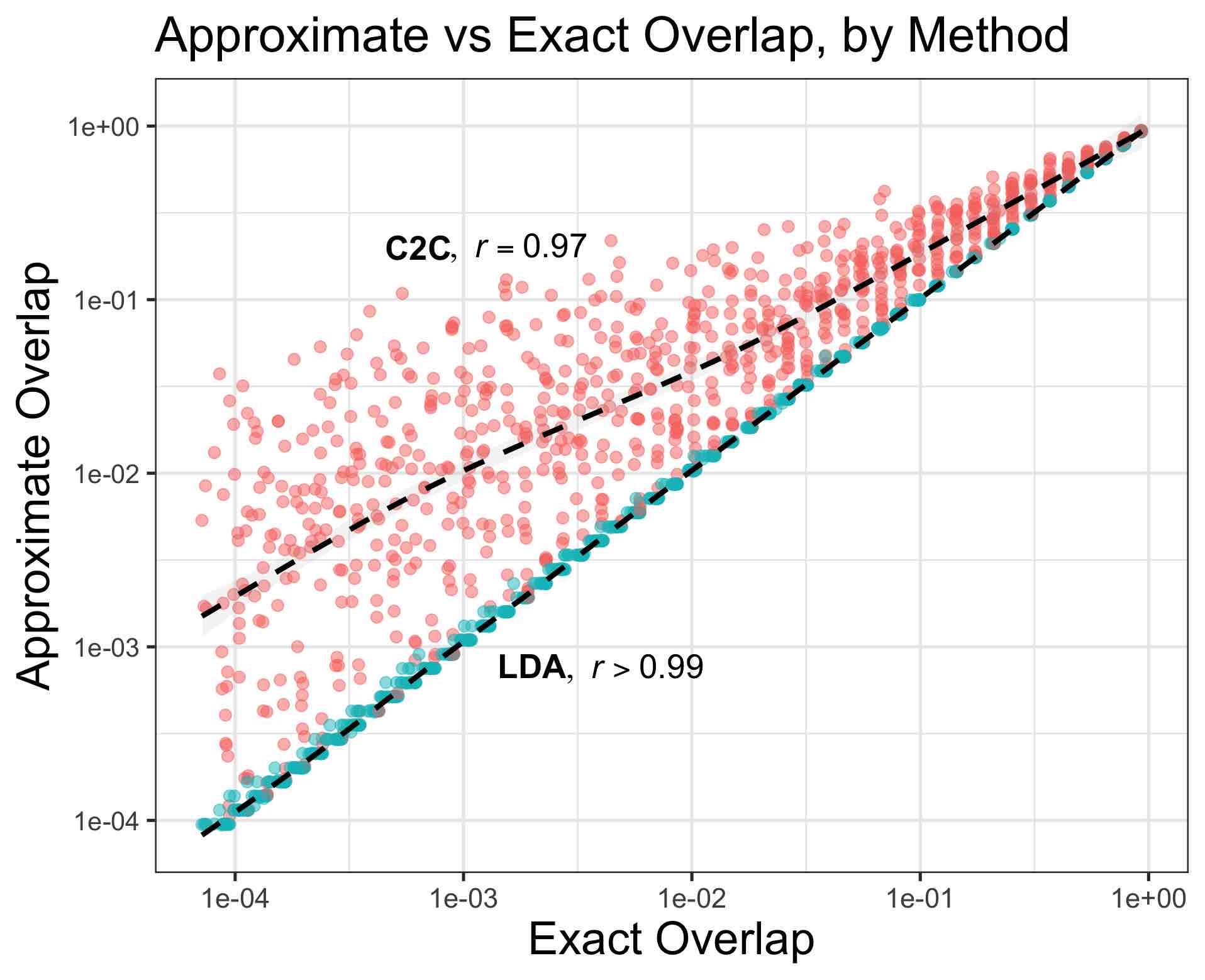}
    \hfill
    \includegraphics[width=0.47\textwidth]{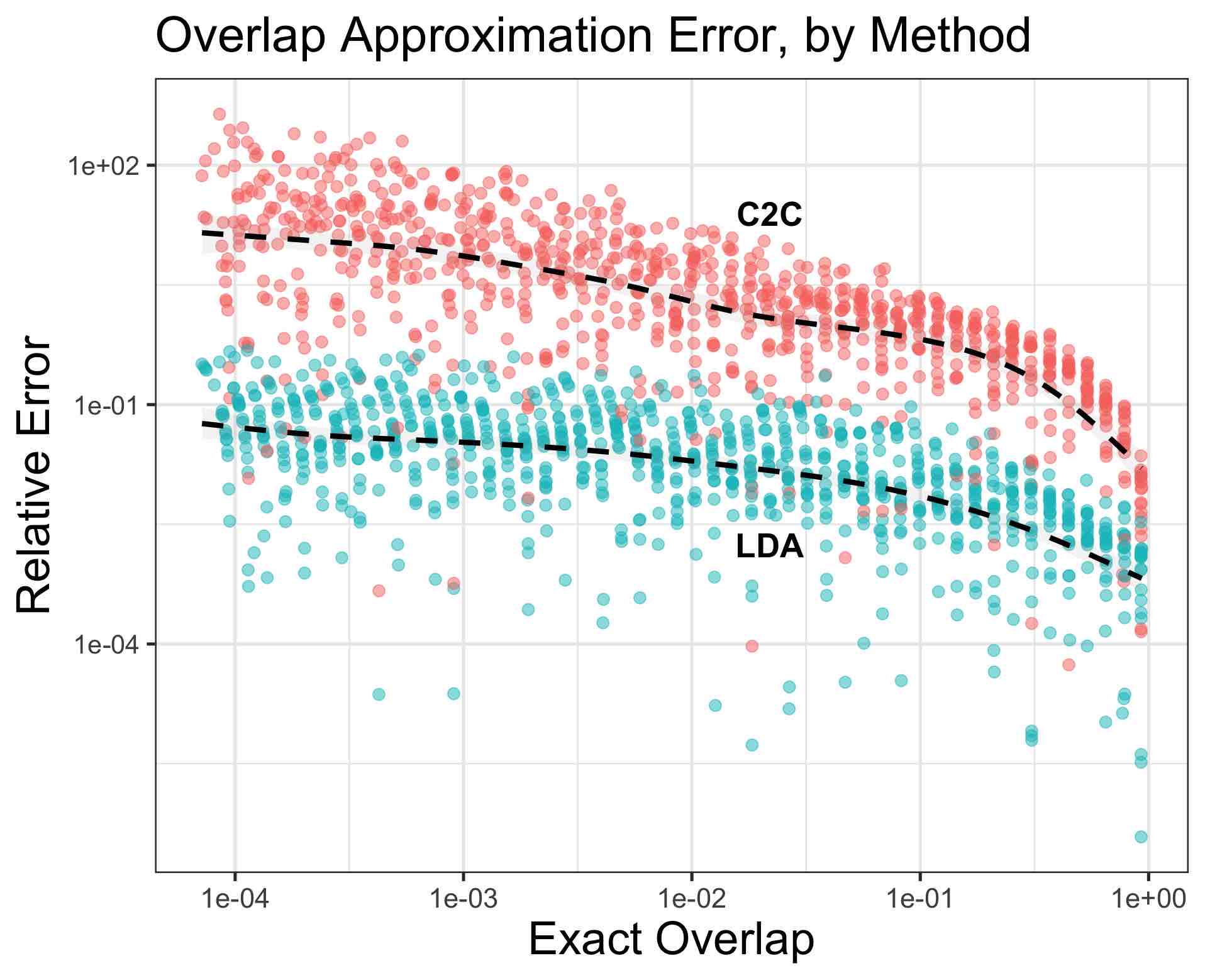}
    \caption{Quality of approximating cluster overlap using our LDA and the simpler center-to-center (C2C) approximations. Both approaches show strong correlations with exact cluster overlap, achieving Pearson correlations $r$ close to 1 (left). However, the C2C method incurs significant relative error, while the LDA approximation typically comes within 10\% of the exact overlap (right). The dashed lines indicate estimated conditional means.}
    \label{fig:approximate_overlap}
\end{figure}

Theorem \ref{thm:ldaoverlap} gives a formula for computing the LDA-approximate cluster overlap. The proof is in Appendix C.

\begin{theorem}[LDA-Based Cluster Overlap]
\label{thm:ldaoverlap}
For two multivariate normal clusters with means $\boldsymbol{\mu}_1 \neq \boldsymbol{\mu}_2$ and covariance matrices $\boldsymbol{\Sigma}_1, \boldsymbol{\Sigma}_2$, the approximate cluster overlap $\alpha_\text{\tiny{LDA}}$ based on the linear separator $\boldsymbol{a}_\text{\tiny{LDA}} = (\frac{\boldsymbol{\Sigma}_1 + \boldsymbol{\Sigma}_2}{2})^{-1}(\boldsymbol{\mu}_2 - \boldsymbol{\mu}_1)$ is 
\begin{equation}
\label{eq:lda_overlap}
\alpha_\text{\tiny{LDA}} = 2\big(1 - \Phi \Big( \frac{\boldsymbol{a}_\text{\tiny{LDA}}^{\top} (\boldsymbol{\mu}_2 - \boldsymbol{\mu}_1)}{\sqrt{\boldsymbol{a}_\text{\tiny{LDA}}^{\top} \boldsymbol{\Sigma}_1 \boldsymbol{a}_\text{\tiny{LDA}}} + \sqrt{\boldsymbol{a}_\text{\tiny{LDA}}^{\top}\boldsymbol{\Sigma}_2\boldsymbol{a}_\text{\tiny{LDA}}}} \Big) \big),
\end{equation}
where $\Phi(z)$ is the cumulative distribution function of the standard normal distribution. 
Moreover, if $\boldsymbol{\Sigma}_1 = \lambda \boldsymbol{\Sigma}_2$ for some $\lambda$ then $\alpha_\text{\tiny{LDA}}$ equals the exact cluster overlap $\alpha$.
\end{theorem}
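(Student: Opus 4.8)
The plan is to reduce the problem to one dimension by projecting both clusters onto the fixed axis $\boldsymbol{a}_\text{LDA}$ and then computing the minimax threshold and error rate in closed form. First I would note that projecting cluster $j$ onto $\boldsymbol{a}_\text{LDA}$ produces a univariate normal with mean $m_j = \boldsymbol{a}_\text{LDA}^\top \boldsymbol{\mu}_j$ and variance $s_j^2 = \boldsymbol{a}_\text{LDA}^\top \boldsymbol{\Sigma}_j \boldsymbol{a}_\text{LDA}$. Since $\boldsymbol{a}_\text{LDA} = M^{-1}(\boldsymbol{\mu}_2 - \boldsymbol{\mu}_1)$ with $M = (\boldsymbol{\Sigma}_1 + \boldsymbol{\Sigma}_2)/2$ positive definite, the separation $m_2 - m_1 = (\boldsymbol{\mu}_2 - \boldsymbol{\mu}_1)^\top M^{-1}(\boldsymbol{\mu}_2 - \boldsymbol{\mu}_1)$ is strictly positive, so class $1$ sits to the left of class $2$ along the axis.

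Next I would find the minimax threshold $c^\ast$. Along the axis the class-conditional error rates are $\Phi\big((m_1 - c)/s_1\big)$ for class $1$ and $\Phi\big((c - m_2)/s_2\big)$ for class $2$; the former decreases and the latter increases in $c$, so, as observed just after (\ref{eq:overlap}), their maximum is minimized exactly where they coincide. Strict monotonicity of $\Phi$ turns this equality into a linear equation whose solution is $c^\ast = (s_2 m_1 + s_1 m_2)/(s_1 + s_2)$. Plugging $c^\ast$ back into either error rate collapses the argument of $\Phi$ to $-(m_2 - m_1)/(s_1 + s_2)$; doubling and re-inserting the definitions of $m_j$ and $s_j$ yields (\ref{eq:lda_overlap}). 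This portion is a routine one-dimensional Gaussian calculation.

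For the equality claim I would first characterize the exact overlap. Repeating the threshold step for a general axis $\boldsymbol{a}$ shows that the minimax error of the best-threshold linear rule along $\boldsymbol{a}$ equals $\Phi(-J(\boldsymbol{a}))$, where
\[ J(\boldsymbol{a}) = \frac{|\boldsymbol{a}^\top(\boldsymbol{\mu}_2 - \boldsymbol{\mu}_1)|}{\sqrt{\boldsymbol{a}^\top \boldsymbol{\Sigma}_1 \boldsymbol{a}} + \sqrt{\boldsymbol{a}^\top \boldsymbol{\Sigma}_2 \boldsymbol{a}}}. \]
Because $\Phi$ is increasing, the exact overlap corresponds to the axis maximizing $J$, and $J$ is invariant under rescaling of $\boldsymbol{a}$, so only its direction matters. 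When $\boldsymbol{\Sigma}_1 = \lambda \boldsymbol{\Sigma}_2$ the denominator factors as $(1 + \sqrt{\lambda})\sqrt{\boldsymbol{a}^\top \boldsymbol{\Sigma}_2 \boldsymbol{a}}$, reducing the maximization of $J$ to that of the generalized Rayleigh quotient $(\boldsymbol{a}^\top(\boldsymbol{\mu}_2 - \boldsymbol{\mu}_1))^2 / (\boldsymbol{a}^\top \boldsymbol{\Sigma}_2 \boldsymbol{a})$. By the generalized Cauchy--Schwarz inequality (set $\boldsymbol{b} = \boldsymbol{\Sigma}_2^{1/2}\boldsymbol{a}$) this quotient is maximized at $\boldsymbol{a} \propto \boldsymbol{\Sigma}_2^{-1}(\boldsymbol{\mu}_2 - \boldsymbol{\mu}_1)$, which is exactly the direction of $\boldsymbol{a}_\text{LDA}$ since here $M = \tfrac{1+\lambda}{2}\boldsymbol{\Sigma}_2$. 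Hence $\boldsymbol{a}_\text{LDA}$ maximizes $J$, its best-threshold error equals the exact minimax error, and $\alpha_\text{LDA} = \alpha$.

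I expect the main obstacle to be the second part rather than the first. The key realizations are that optimizing the threshold collapses the per-axis error to the single quantity $\Phi(-J(\boldsymbol{a}))$, and that the proportional-covariance case turns the maximization of $J$ into a classical Rayleigh quotient with a known closed-form maximizer. A small but essential point is verifying that the overlap depends only on the \emph{direction} of the axis, so that the scalar mismatch between $\boldsymbol{a}_\text{LDA}$ and $\boldsymbol{\Sigma}_2^{-1}(\boldsymbol{\mu}_2 - \boldsymbol{\mu}_1)$ does not matter.
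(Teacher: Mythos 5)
Your proposal is correct, and it splits naturally in two. For the closed-form expression (\ref{eq:lda_overlap}), you take essentially the same route as the paper: reduce to the one-dimensional marginals along $\boldsymbol{a}_\text{\tiny{LDA}}$, use the fact that the minimax threshold equalizes the two class-conditional error rates, and solve the resulting linear equation in the quantiles. You are somewhat more careful than the paper in two places: you solve explicitly for the equalizing threshold $c^{\ast}$ rather than asserting the quantile-matching identity, and you verify the sign condition $\boldsymbol{a}_\text{\tiny{LDA}}^{\top}(\boldsymbol{\mu}_2-\boldsymbol{\mu}_1) > 0$ via positive definiteness of $(\boldsymbol{\Sigma}_1+\boldsymbol{\Sigma}_2)/2$, which is what licenses writing the formula without absolute values. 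For the equality claim under $\boldsymbol{\Sigma}_1 = \lambda\boldsymbol{\Sigma}_2$, however, your argument is genuinely different from, and stronger than, the paper's. The paper argues that maximum-likelihood classification yields a linear boundary coinciding with the LDA solution; but when $\lambda \neq 1$ the Gaussian likelihood-ratio boundary is quadratic (the quadratic terms $\boldsymbol{x}^{\top}(\lambda^{-1}-1)\boldsymbol{\Sigma}_2^{-1}\boldsymbol{x}$ do not cancel), and the link between ML optimality and \emph{minimax-over-linear-rules} optimality is itself left implicit, so the paper's proof of this part is at best incomplete outside $\lambda = 1$. Your argument sidesteps both issues: you characterize the exact overlap as $2\Phi(-\max_{\boldsymbol{a}} J(\boldsymbol{a}))$ by optimizing the threshold axis-by-axis, observe scale invariance of $J$, and show via the generalized Rayleigh quotient / Cauchy--Schwarz that the maximizing direction is $\boldsymbol{\Sigma}_2^{-1}(\boldsymbol{\mu}_2-\boldsymbol{\mu}_1)$, which is proportional to $\boldsymbol{a}_\text{\tiny{LDA}}$ precisely because $(\boldsymbol{\Sigma}_1+\boldsymbol{\Sigma}_2)/2 = \tfrac{1+\lambda}{2}\boldsymbol{\Sigma}_2$. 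This is self-contained, valid for all $\lambda > 0$, and in effect repairs the gap in the paper's own justification.
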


Theorem \ref{thm:ldaoverlap} shows that cluster overlap depends on quantiles $q$ of the form
\begin{align}
\label{eq:overlap_metric}
    q(\boldsymbol{\mu}_1,\boldsymbol{\mu}_2; \boldsymbol{\Sigma}_1, \boldsymbol{\Sigma}_2; \boldsymbol{a}) & = 
    \frac{\boldsymbol{a}^{\top} (\boldsymbol{\mu}_2 - \boldsymbol{\mu}_1)}{\sqrt{\boldsymbol{a}^{\top} \boldsymbol{\Sigma}_1 \boldsymbol{a}} + \sqrt{\boldsymbol{a}^{\top}\boldsymbol{\Sigma}_2\boldsymbol{a}}},
\end{align}
where $\boldsymbol{a}$ is a classification axis. Since these quantiles are inversely related to cluster overlap, they quantify cluster \textit{separation}.

When generating synthetic data, \texttt{repliclust} uses Theorem \ref{thm:ldaoverlap} even when cluster distributions are non-normal. Figure \ref{fig:viz_nonnormal_overlap} in Appendix D suggests that controlling overlap between non-normal distributions works well, except for distributions with bounded support (such as beta).

\subsection{Adjusting Cluster Centers to Meet Overlap Constraints}

To enforce maximum and minimum overlaps for a mixture model, we optimize a loss function using stochastic gradient descent (SGD). Given mixture model parameters $\boldsymbol{\theta}$ (including the cluster centers, principal axes, and principal axis lengths), the \textit{overlap loss} is
\begin{equation}
\label{eq:overlaploss}
\mathcal{L}(\boldsymbol{\theta}) = \frac{1}{k} \sum_{i=1}^k \ell_i,
\end{equation}
where the loss on the $i$-th cluster is
\begin{equation}
\label{eq:singleclusterloss}
\ell_i = p_{\lambda}\big((\min_{j \neq i} q_{ij} - q_{\text{max}})^{+}\big) + \sum_{j \neq i} p_{\lambda}\big((q_{\text{min}} - q_{ij})^{+}\big).
\end{equation}
Here, $q_\text{min}$, $q_\text{max}$, and $q_{ij}$ measure cluster separation as expressed in Equation (\ref{eq:overlap_metric}): $q_\text{min}$ is the minimum allowed separation (corresponding to overlap $\alpha=\texttt{max\_overlap}$); $q_\text{max}$ is the maximum allowed separation (corresponding to $\alpha=\texttt{min\_overlap}$); and $q_{ij}$ is the separation between the $i$-th and $j$-th clusters. The penalty function $p_{\lambda}$ is the polynomial $p_{\lambda}(x) := \lambda x + (1-\lambda) x^2$, where $\lambda$ is a tuning parameter. Finally, $x^{+} := \max(x,0)$ is a filter that passes on only positive inputs (corresponding to a violation of user-specified constraints).

\subsubsection{Intuition behind the Overlap Loss}
\label{subsec:intuition_overlap_loss}

By design, the loss (\ref{eq:singleclusterloss}) vanishes when the cluster centers, principal axes, and principal axis lengths satisfy the user-specified overlap constraints. The first term penalizes violation of the minimum overlap condition. Indeed, if cluster $i$ is too far away from the other clusters, the separation $\min_{j \neq i} q_{ij}$ between cluster $i$ and its closest neighbor exceeds the maximum allowed separation $q_\text{max}$. A penalty of the excess $(\min_{j \neq i} q_{ij} - q_\text{max})^{+}$ yields the first term in (\ref{eq:singleclusterloss}). The second term measures violation of the maximum overlap condition. If the separation $q_{ij}$ between clusters $i$ and $j$ falls short of the smallest allowed separation $q_\text{min}$, the shortfall $(q_\text{min} - q_{ij})^{+}$ incurs a penalty that serves to push these clusters apart.

The penalty $p_{\lambda}$ in (\ref{eq:singleclusterloss}) ranges from quadratic to linear based on the value of $\lambda$. Keeping the penalty partly linear ($\lambda > 0$) helps gradient descent drive the overlap loss to \textit{exactly} zero because a purely quadratic loss would result in a vanishing derivative when overlap constraints approach satisfaction.

\subsubsection{Running the Minimization in Practice}
\label{subsec:sgd_init}

When minimizing (\ref{eq:overlaploss}) using SGD, we initially place cluster centers randomly within a sphere. The volume $V$ of this sphere influences the initial overlaps between clusters. To select an appropriate value, we fix the ratio $\rho$ of the sum of cluster volumes to $V$; essentially, $\rho$ is the density of clusters within the sampling volume. Values of $\rho$ around 10\% work well in low dimensions. In higher dimensions, however, results from the mathematics of sphere-packing motivate a downward adjustment. A lower bound by \cite{Ball} states that the maximum achievable density when placing non-overlapping spheres inside $\mathbb{R}^p$ is at least $p2^{1-p}$. Thus, in $p$ dimensions we use an adjusted density $\rho^\text{adj}$ defined by
\begin{equation*}
\rho^\text{adj}(p) = p2^{1-p} \rho^{2D},
\end{equation*}
where $\rho^{2D} \approx 10\%$ is the equivalent density in 2D.

Following initialization, we optimize the cluster centers $\{\boldsymbol{\mu}_i\}_{i=1}^{k}$ using stochastic gradient descent. During this process, the principal axes and their lengths are fixed. Each iteration performs the update
\begin{equation}
\label{eq:sgd}
\big[ \boldsymbol{\mu}_1 | \boldsymbol{\mu}_2 |~ ... ~| \boldsymbol{\mu}_k \big] ~ \leftarrow ~ \big[\boldsymbol{\mu}_1 | \boldsymbol{\mu}_2 |~ ... ~| \boldsymbol{\mu}_k\big] ~-~ \eta~ \big[\nabla_{\boldsymbol{\mu}_1} \ell_i | \nabla_{\boldsymbol{\mu}_2} \ell_i |~ ... ~| \nabla_{\boldsymbol{\mu}_k} \ell_i \big]
\end{equation}
on the single-cluster loss $\ell_i$, where $\eta$ is the learning rate. For each epoch, we randomly permute the order $i = 1,2,...,k$ of clusters and apply the updates (\ref{eq:sgd}) in turn for each cluster.

Experiments suggest that our minimization procedure drives the overlap loss to zero at an exponential rate (linear convergence rate), as expected for gradient descent. The number of epochs required seems largely independent of the number of clusters, though it increases slightly with the number of dimensions.

\subsection{Making Cluster Shapes More Irregular}
\label{sec:realism}

In many application domains, ellipsoidal center-based clusters are good models for the data. This is often the case when clusters can be characterized by a ``typical'' element. For example, in single-cell RNA sequencing, \cite{PopAlign} represent the differences in transcriptional activity across cell types using Gaussian mixture models. In this case, the cluster centers correspond to each cell type's typical gene expression profile. Similarly, in deep learning, a large language model's hidden states often form compact clusters that can effectively classify text (\citealp{UniversalLMClassification}), assess the truthfulness of a response (\citealp{AzariaMitchell2023}), and help detect out-of-distribution inputs (\citealp{RenOODforLLMs}).

However, convex clusters can be inappropriate. For example, \cite{DBSCAN} developed the density-based DBSCAN algorithm specifically to handle spatial (GPS) data. Characterized by thin loops and irregular shapes, such data does not fit a center-based clustering paradigm at all. Another example is directional data lying on a sphere (\citealp{ClusteringVMF}; \citealp{DirectionalCoClustering}). In this case, the clusters have centers but are not convex.

To accommodate use cases where ellipsoidal clusters are inappropriate, \texttt{repliclust} provides two post-processing functions for creating non-convex clusters. The first, \\ \texttt{repliclust.distort}, passes a data set through a randomly initialized neural network, as shown in Figure \ref{fig:distort}. The second, \texttt{repliclust.wrap\_around\_sphere}, wraps a $p$-dimensional data set around the $(p+1)$-dimensional sphere by applying an inverse stereographic projection, as shown in Figure \ref{fig:wrap_around_sphere}. Appendix D describes the default architecture of the neural network used in \texttt{repliclust.distort}.

\begin{figure}[h!]
\centering
    \begin{subfigure}{0.47\textwidth}
        \centering
        \includegraphics[width=\textwidth]{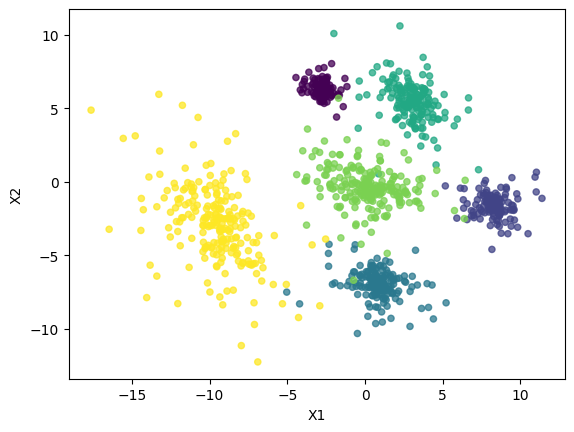}
        \caption{Before \texttt{distort}}
    \end{subfigure}
    \hfill
    \begin{subfigure}{0.47\textwidth}
        \centering
        \includegraphics[width=\textwidth]{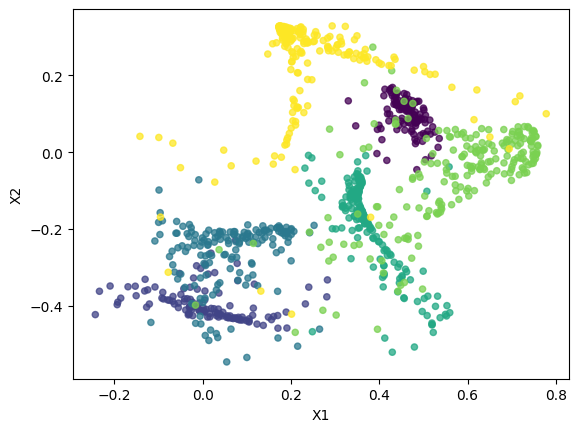}
        \caption{After \texttt{distort}}
    \end{subfigure}
    \caption{You can create non-convex, irregularly shaped clusters by applying the \texttt{distort} function, which runs your dataset through a randomly initialized neural network.}
    \label{fig:distort}
\end{figure}

\begin{figure}[h!]
\centering
    \begin{subfigure}{0.47\textwidth}
        \centering
        \includegraphics[width=\textwidth]{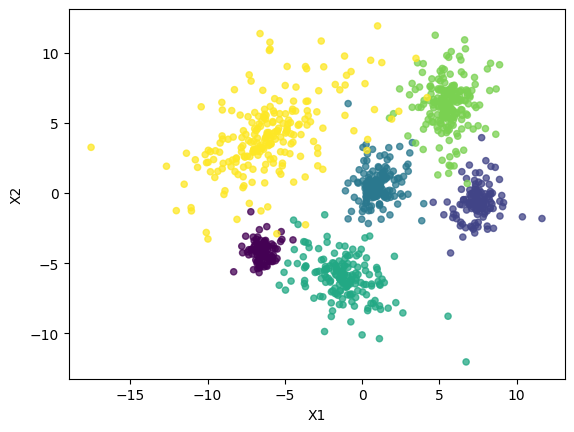}
        \caption{Before \texttt{wrap\_around\_sphere}}
    \end{subfigure}
    \hfill
    \begin{subfigure}{0.47\textwidth}
        \centering
        \includegraphics[width=\textwidth]{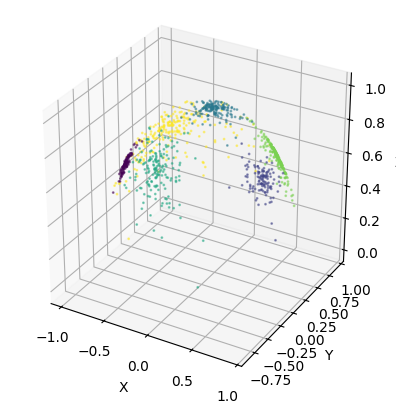}
        \caption{After \texttt{wrap\_around\_sphere}}
    \end{subfigure}
    \caption{You can create directional data by wrapping your dataset around the sphere using the \texttt{wrap\_around\_sphere} function. The function works in arbitrary dimensions.}
    \label{fig:wrap_around_sphere}
\end{figure}

\subsection{Generating Archetypes from Natural Language}

To fully realize the potential of a high-level, archetype-based approach to synthetic data generation, our package \texttt{repliclust} draws on the OpenAI API (\citealp{openai_api}) to allow users to create data set archetypes directly from verbal descriptions in English.

To map a user's natural language input to the high-level geometric parameters of Table \ref{tbl:archetype_params}, we use few-shot prompting of OpenAI's GPT-4o large language model model (\citealp{GPT3Paper}; \citealp{GPT4Report}). We use this same approach to automatically create short identifiers for archetypes, such as ``ten\_highly\_oblong\_very\_different\_shapes\_moderate\_overlap''. These identifiers contain no white space and obey the naming rules for Python variables.

Version 1.0.0. of \texttt{repliclust} uses 22-shot prompting to map archetype descriptions to parameters, and 11-shot prompting to generate archetype identifiers, which appears to work well. In the rare case that the natural language workflow fails, \texttt{repliclust} throws an error. We release all few-shot examples and prompt templates in the \texttt{natural\_language} module of our code base. They are also reproduced in Appendix F.

\section{Results}
\label{sec:results}

In this section, we first illustrate the convenience of archetype-based data generation by conducting a mock benchmark between several clustering algorithms. Second, we verify that our notion of cluster overlap effectively captures clustering difficulty (even in higher dimensions), and study the distribution of pairwise overlaps in data sets with multiple clusters.

\subsection{Mock Benchmark}
\label{sec:benchmark}

We demonstrate the convenience and informativeness of our archetype-based data generator by running a mock benchmark comparing several clustering algorithms on data drawn from different archetypes. The benchmark is not meant to comprehensively evaluate the strengths and weaknesses of the clustering algorithms we consider. Instead, our goal is to demonstrate the advantages (in convenience and informativeness) of an archetype-based approach to synthetic data generation.

First, we construct six archetypes by providing \texttt{repliclust} with the following verbal descriptions: 
\begin{enumerate}
    \item \texttt{`twelve clusters of different distributions'}
    \item \texttt{`twelve clusters of different distributions and high class imbalance'}
    \item \texttt{`seven highly separated clusters in 10D with very different shapes'}
    \item \texttt{`seven clusters in 10D with very different shapes and significant overlap'}
    \item \texttt{`four clusters in 100D with 100 samples each'}
    \item \texttt{`four clusters in 100D with 1000 samples each'}
\end{enumerate}
\texttt{Repliclust} maps these descriptions to the following data set archetypes:
\begin{enumerate}
\item
\begin{lstlisting}
{`name': `twelve_clusters_different_distributions', `n_clusters': 12, `dim': 2, `n_samples': 1200, `aspect_ref': 1.5, `aspect_maxmin': 2, `radius_maxmin': 3, `imbalance_ratio': 2, `max_overlap': 0.05, `min_overlap': 0.001, `distributions': [`normal', `exponential', `gamma', `weibull', `lognormal']}
\end{lstlisting}

\item
\begin{lstlisting}
{`name': `twelve_different_distributions_high_class_imbalance', `n_clusters': 12, `dim': 2, `n_samples': 1200, `aspect_ref': 1.5, `aspect_maxmin': 2, `radius_maxmin': 3, `imbalance_ratio': 5, `max_overlap': 0.05, `min_overlap': 0.001, `distributions': [`normal', `exponential', `gamma', `weibull', `lognormal']}
\end{lstlisting}

\item
\begin{lstlisting}
{`name': `seven_highly_separated_10d_very_different_shapes', `n_clusters': 7, `dim': 10, `n_samples': 700, `aspect_ref': 1.5, `aspect_maxmin': 3.0, `radius_maxmin': 3.0, `imbalance_ratio': 2, `max_overlap': 0.0001, `min_overlap': 1e-05, `distributions': [`normal', `exponential']}
\end{lstlisting}

\item
\begin{lstlisting}
{`name': `seven_very_different_shapes_significant_overlap_10d', `n_clusters': 7, `dim': 10, `n_samples': 700, `aspect_ref': 1.5, `aspect_maxmin': 3.0, `radius_maxmin': 3, `imbalance_ratio': 2, `max_overlap': 0.2, `min_overlap': 0.1, `distributions': [`normal', `exponential']}
\end{lstlisting}

\item
\begin{lstlisting}
{`name': `four_clusters_100d_100_samples_each', `n_clusters': 4, `dim': 100, `n_samples': 400, `aspect_ref': 1.5, `aspect_maxmin': 2, `radius_maxmin': 3, `imbalance_ratio': 2, `max_overlap': 0.05, `min_overlap': 0.001, `distributions': [`normal', `exponential']}
\end{lstlisting}

\item
\begin{lstlisting}
{`name': `four_clusters_100d_1000_samples_each', `n_clusters': 4, `dim': 100, `n_samples': 4000, `aspect_ref': 1.0, `aspect_maxmin': 1.0, `radius_maxmin': 3, `imbalance_ratio': 2, `max_overlap': 0.05, `min_overlap': 0.001, `distributions': [`normal', `exponential']}
\end{lstlisting}
\end{enumerate}
Note that the automatically generated names for archetype 5. and 6. did not end in the suffix ``\_each.'' We added this suffix for clarity.

\subsubsection{Examining the Generated Data Sets}

Figure \ref{fig:archetype_untf} shows four representative data sets with convex clusters drawn from each archetype. Figure \ref{fig:archetype_tf} shows non-convex clusters resulting from applying the \texttt{distort} function (as described in Section \ref{sec:realism}). For archetypes with dimensionality greater than two, we use t-SNE to visualize the data sets in 2D (\citealp{tSNEPaper}). 

The figures show that \texttt{repliclust} effectively generates data sets with similar geometric characteristics. Moreover, the data sets match up well with the user-specified verbal descriptions. Applying \texttt{distort} seems to make some clusters very long and thin, but otherwise results in satisfying non-convex clusters. The \texttt{distort} function seems to roughly preserve cluster overlaps; we leave a careful study of this to future work. The 10 and 100-dimensional archetypes indicate that our overlap control effectively handles different numbers of dimensions.

\begin{figure}[htbp]
    \centering
    \includegraphics[width=0.85\textwidth]{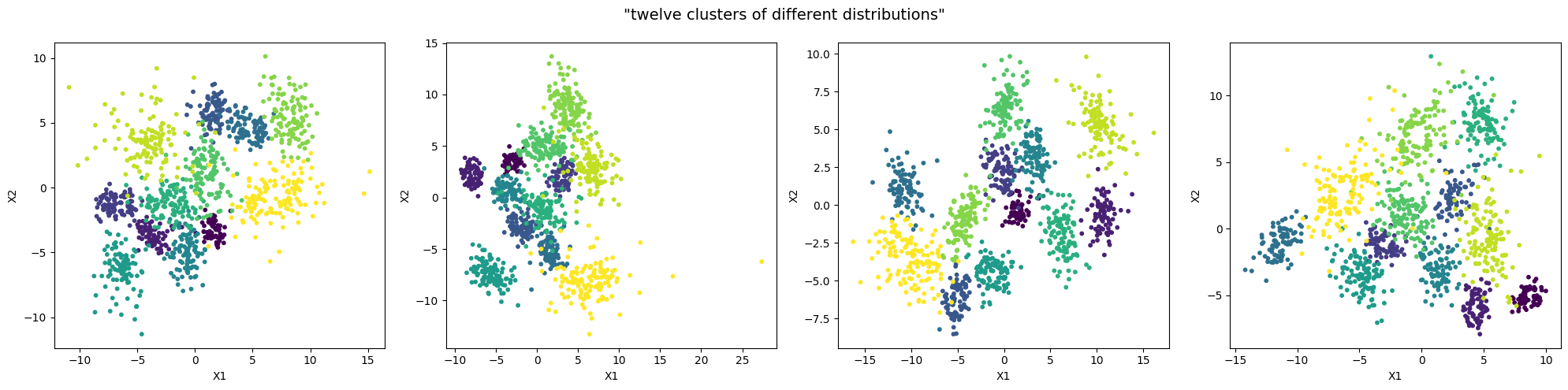}
    \includegraphics[width=0.85\textwidth]{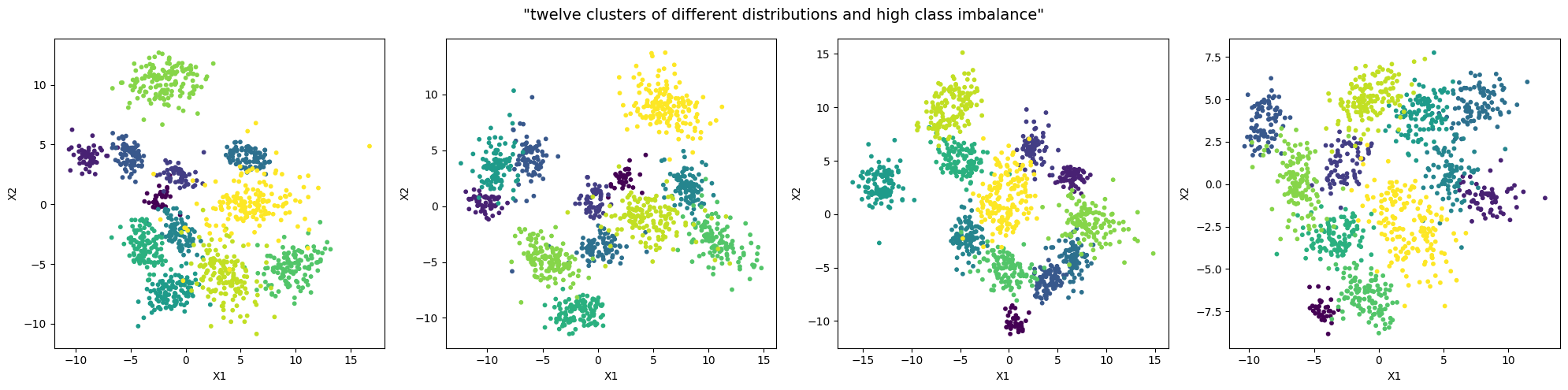}
    \includegraphics[width=0.85\textwidth]{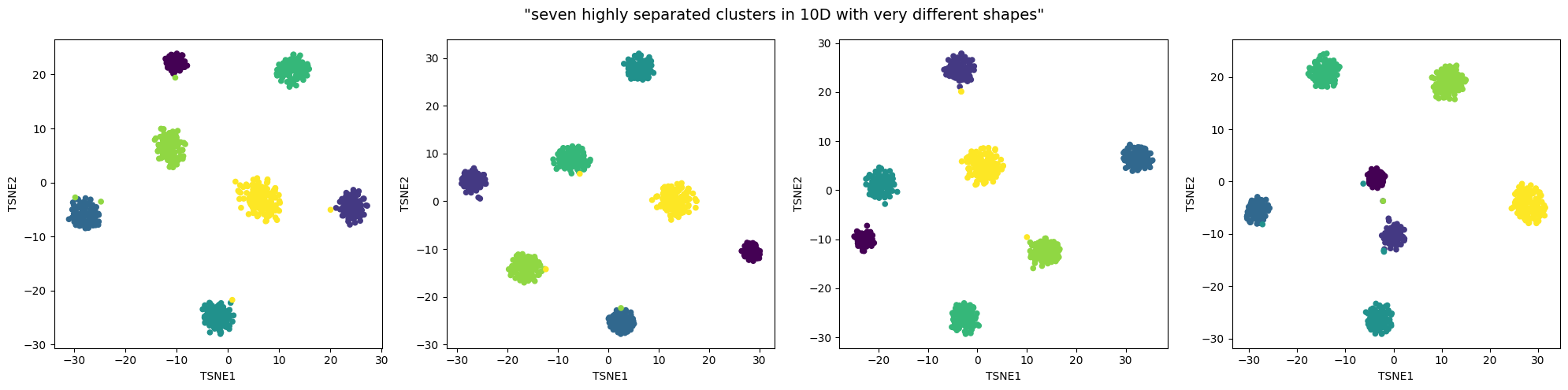}
    \includegraphics[width=0.85\textwidth]{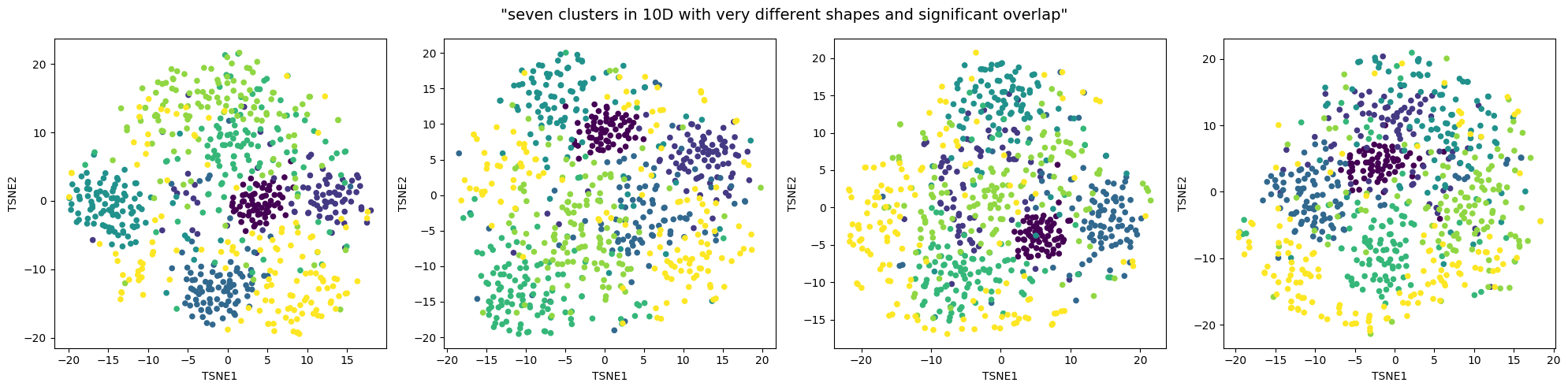}
    \includegraphics[width=0.85\textwidth]{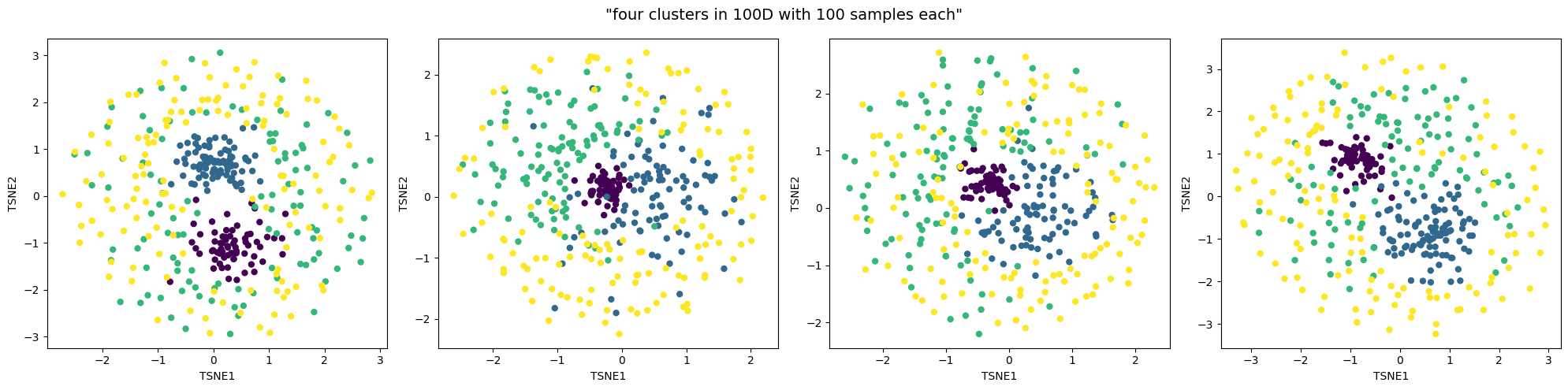}
    \includegraphics[width=0.85\textwidth]{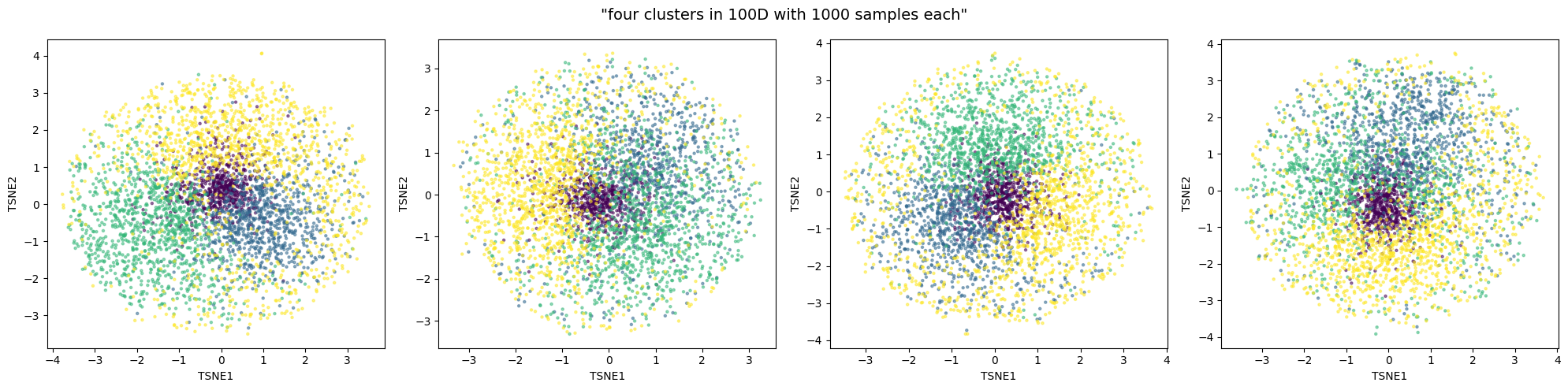}
    \caption{Representative convex clusters drawn from the archetypes in our benchmark (not cherry-picked).}
    \label{fig:archetype_untf}
\end{figure}

\begin{figure}[htbp]
    \centering
    \includegraphics[width=0.85\textwidth]{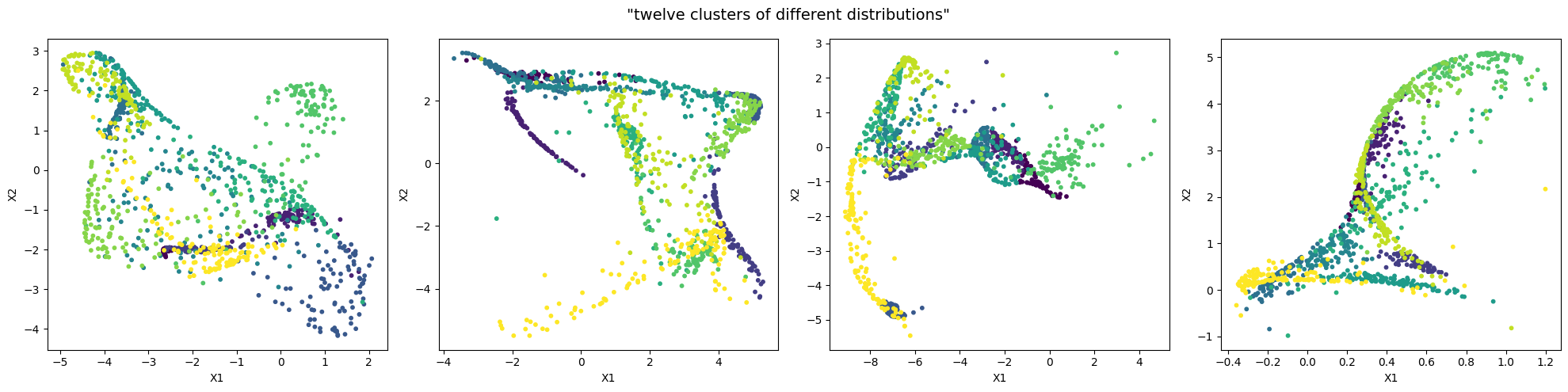}
    \includegraphics[width=0.85\textwidth]{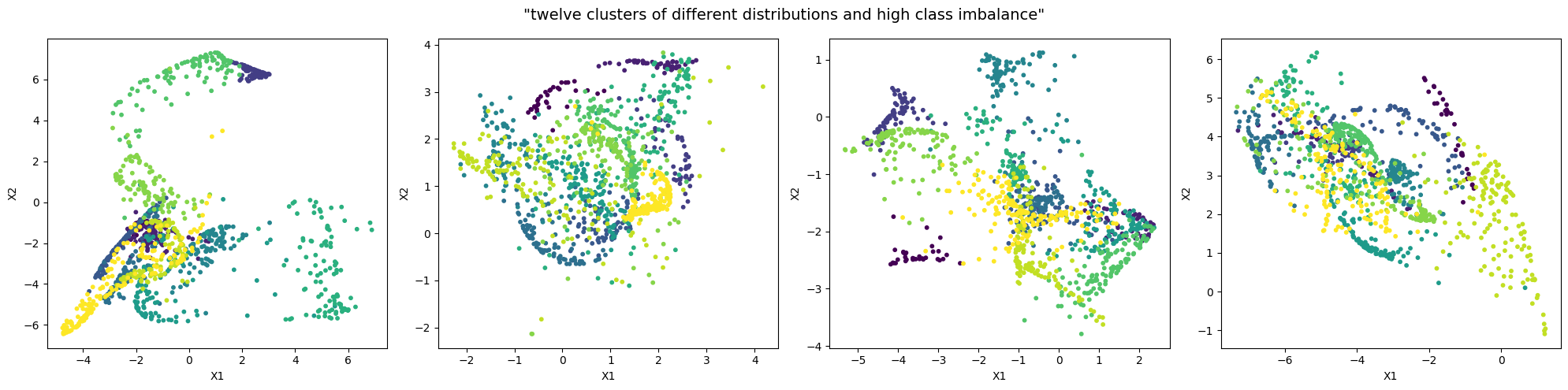}
    \includegraphics[width=0.85\textwidth]{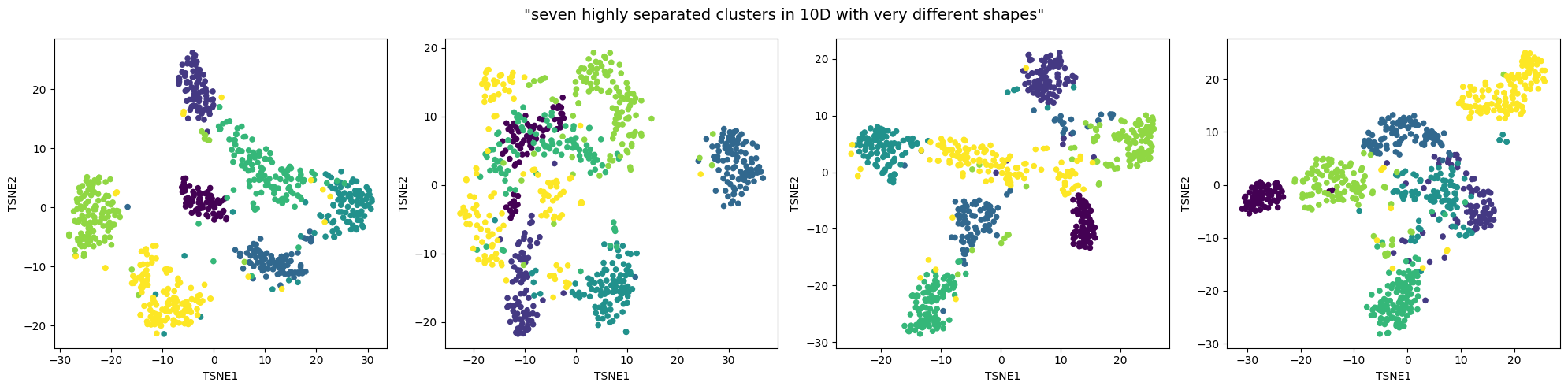}
    \includegraphics[width=0.85\textwidth]{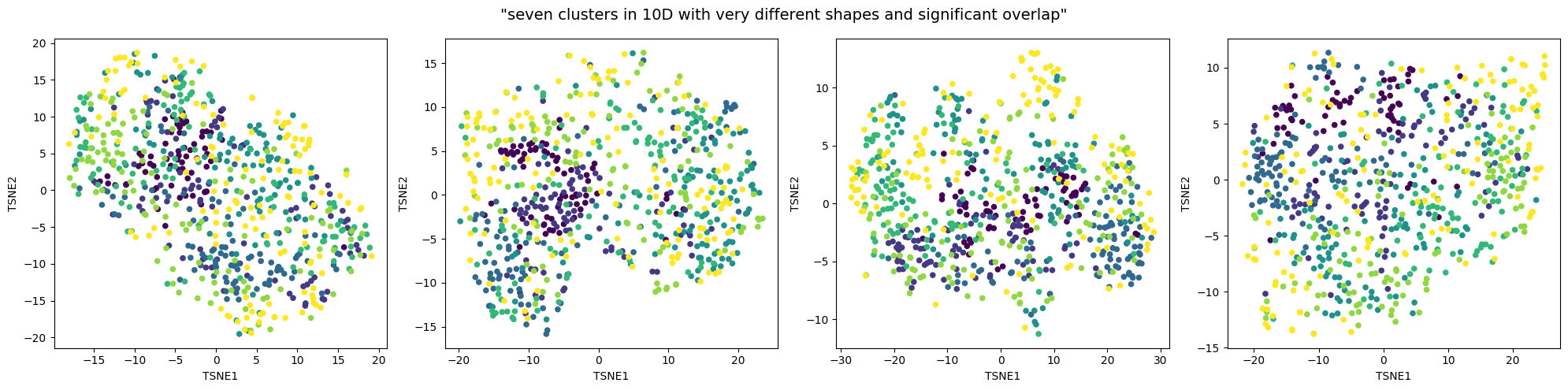}
    \includegraphics[width=0.85\textwidth]{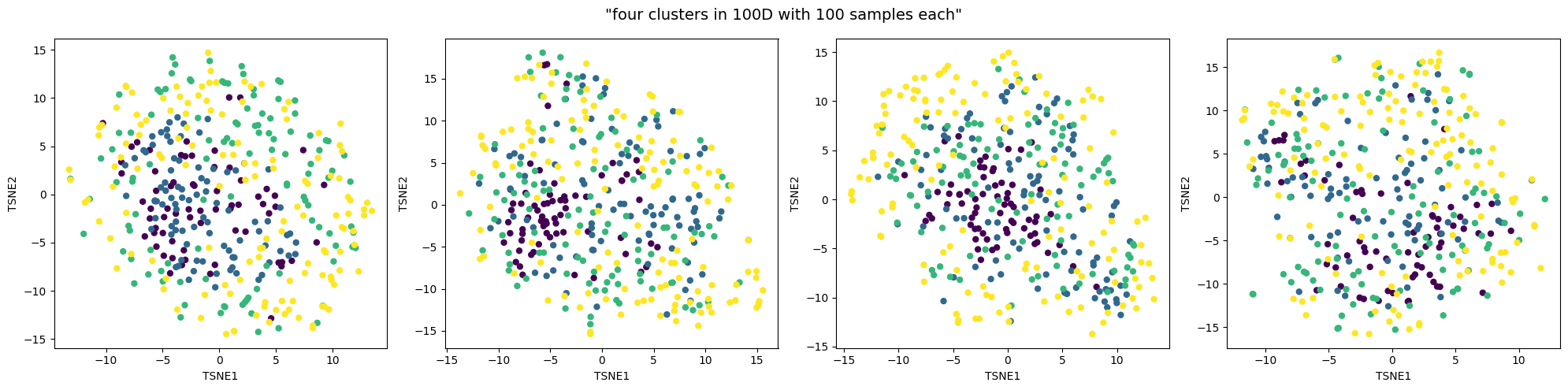}
    \includegraphics[width=0.85\textwidth]{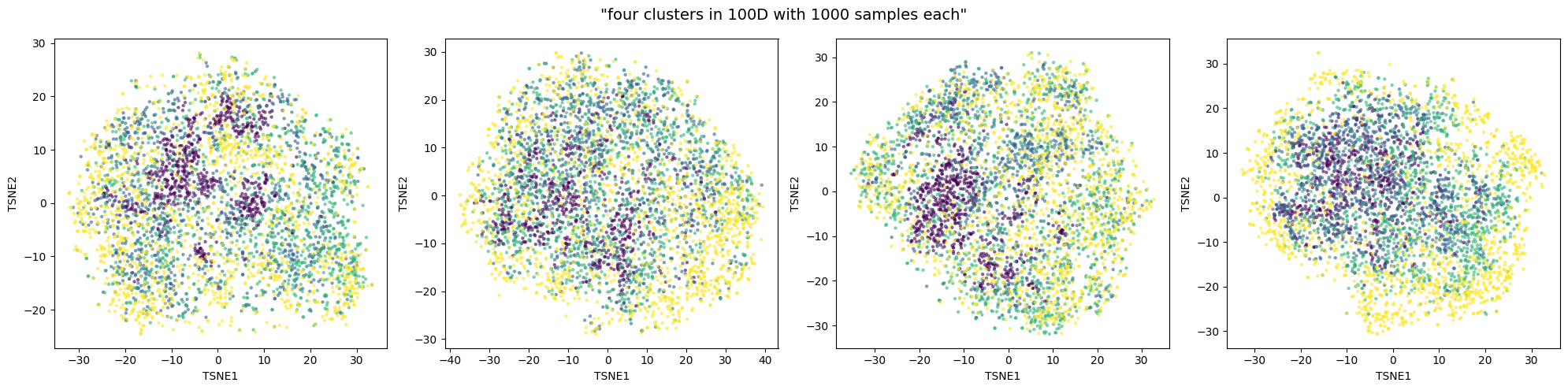}
    \caption{Representative non-convex clusters drawn from the archetypes in our benchmark (not cherry-picked).}
    \label{fig:archetype_tf}
\end{figure}

\subsubsection{Comparing the Clustering Algorithms}

\nocite{em}
\nocite{kmeans}
\nocite{SpectralNg}
\nocite{SpectralShiMalik}
\nocite{hdbscan}

We compare the following clustering algorithms: K-Means, hierarchical (with Ward linkage), spectral (with radial-basis function affinity), HDBSCAN, and expectation maximization for a Gaussian mixture model (EM-GMM). We originally intended to include DBSCAN as well. However, the heuristics we tried for choosing the neighborhood radius (including what is suggested in \citealp{dbscanheuristics}) did not work well across the range of dimensionalities we consider, resulting in too many noise points (see also the discussion by \citealp{dbscanrevisitedrevisited}).

We measure performance in terms of adjusted mutual information (AMI) and adjusted Rand index (ARI), based on the ground truth cluster labels (\citealp{AMIPaper}; \citealp{ARIPaper}). To carry out the benchmark, we sample 10 times from each archetype, resulting in 60 distinct data sets. We repeat this process twice to evaluate performance on convex and non-convex clusters (the latter resulting from applying the \texttt{distort} function on new data sets). Thus, the full benchmark is based on 120 distinct data sets. Table \ref{tbl:benchmark_results} shows the results for K-Means, hierarchical, spectral, and EM-GMM. All these algorithms receive the true number of clusters as an input.

Table \ref{tbl:hdbscan_results} separately lists the performance for HDBSCAN, which we ran using the scikit-learn implementation (\citealp{scikit-learn}) with \texttt{min\_samples}=5. We present the HDBSCAN results separately for two reasons. First, as a density-based algorithm, HDBSCAN cannot make use of the true number of clusters, putting it at a disadvantage. Second, HDBSCAN reports a large number of noise points ($\approx$60\% on average), so that its results are not directly comparable to that of the other algorithms. We report the performance of HDBSCAN based only on the non-noise points. This leads to higher performance numbers, which may compensate for the disadvantage HDBSCAN faces in not knowing the true number of clusters.

\begin{table*}[ht]
\centering
\caption{Benchmark results on convex and non-convex data. The best performance for each archetype is printed in \textbf{bold}.}
\label{tab:clustering_comparison}
\resizebox{\textwidth}{!}{%
\begin{tabular}{|l|cccc|cccc|}
\hline
\textbf{Archetype} & \multicolumn{4}{c|}{\textbf{Convex}} & \multicolumn{4}{c|}{\textbf{Non-Convex}} \\
\hline
                  & \multicolumn{4}{c|}{\textbf{AMI}} & \multicolumn{4}{c|}{\textbf{AMI}} \\
                  & EM-GMM & K-Means & Spectral & Hierarchical & EM-GMM & K-Means & Spectral & Hierarchical \\
\hline
twelve\_clusters\_different\_distributions & \textbf{0.884 (.007)} & 0.848 (.008) & 0.859 (.010) & 0.853 (.007) & \textbf{0.576 (.017)} & 0.563 (.012) & 0.565 (.015) & 0.565 (.011) \\
twelve\_different\_distributions\_high\_class\_imbalance & \textbf{0.854 (.008)} & 0.836 (.007) & 0.852 (.010) & 0.834 (.009) & 0.544 (.021) & 0.542 (.023) & 0.547 (.025) & \textbf{0.553 (.024)} \\
seven\_highly\_separated\_10d\_very\_different\_shapes & 0.983 (.009) & 0.976 (.005) & 0.986 (.003) & \textbf{0.990 (.002)} & \textbf{0.750 (.035)} & 0.606 (.027) & 0.709 (.027) & 0.637 (.029) \\
seven\_very\_different\_shapes\_significant\_overlap & 0.349 (.019) & \textbf{0.452 (.009)} & 0.443 (.014) & 0.380 (.010) & \textbf{0.173 (.014)} & 0.149 (.008) & 0.165 (.010) & 0.145 (.010) \\
four\_clusters\_100d\_100\_samples\_each & 0.063 (.008) & 0.512 (.028) & 0.074 (.023) & \textbf{0.654 (.022)} & 0.077 (.011) & \textbf{0.100 (.012)} & 0.079 (.008) & 0.092 (.008) \\
four\_clusters\_100d\_1000\_samples\_each & 0.548 (.061) & 0.664 (.023) & 0.205 (.014) & \textbf{0.868 (.025)} & \textbf{0.307 (.030)} & 0.145 (.014) & 0.109 (.012) & 0.132 (.016) \\
\hline
\textbf{Average} & 0.614 (.044) & 0.715 (.025) & 0.570 (.046) & \textbf{0.763 (.026)} & \textbf{0.404 (.032)} & 0.351 (.030) & 0.362 (.033) & 0.354 (.031) \\
\hline
                  & \multicolumn{4}{c|}{\textbf{ARI}} & \multicolumn{4}{c|}{\textbf{ARI}} \\
                  & EM-GMM & K-Means & Spectral & Hierarchical & EM-GMM & K-Means & Spectral & Hierarchical \\
\hline
twelve\_clusters\_different\_distributions & \textbf{0.855 (.011)} & 0.795 (.012) & 0.807 (.018) & 0.798 (.014) & 0.368 (.021) & \textbf{0.365 (.015)} & 0.364 (.015) & 0.362 (.012) \\
twelve\_different\_distributions\_high\_class\_imbalance & \textbf{0.800 (.019)} & 0.760 (.012) & 0.787 (.020) & 0.756 (.017) & 0.341 (.022) & 0.347 (.021) & 0.345 (.025) & \textbf{0.360 (.024)} \\
seven\_highly\_separated\_10d\_very\_different\_shapes & 0.967 (.020) & 0.977 (.005) & 0.988 (.003) & \textbf{0.992 (.002)} & \textbf{0.684 (.045)} & 0.505 (.035) & 0.628 (.036) & 0.523 (.039) \\
seven\_very\_different\_shapes\_significant\_overlap & 0.238 (.023) & 0.375 (.012) & \textbf{0.385 (.016)} & 0.291 (.015) & \textbf{0.109 (.011)} & 0.090 (.006) & 0.089 (.007) & 0.081 (.007) \\
four\_clusters\_100d\_100\_samples\_each & 0.004 (.007) & 0.404 (.036) & 0.053 (.014) & \textbf{0.554 (.028)} & 0.018 (.006) & \textbf{0.062 (.011)} & 0.053 (.008) & 0.051 (.005) \\
four\_clusters\_100d\_1000\_samples\_each & 0.408 (.048) & 0.601 (.040) & 0.185 (.011) & \textbf{0.874 (.032)} & \textbf{0.252 (.029)} & 0.088 (.014) & 0.065 (.012) & 0.072 (.016) \\
\hline
\textbf{Average} & 0.545 (.047) & 0.652 (.030) & 0.534 (.044) & \textbf{0.711 (.031)} & \textbf{0.295 (.029)} & 0.243 (.023) & 0.264 (.025) & 0.241 (.025) \\
\hline
\end{tabular}%
}
\label{tbl:benchmark_results}
\end{table*}

\begin{table*}[ht]
\centering
\caption{Benchmark results for HDBSCAN on convex and non-convex data. Numbers that outperform the algorithms in Table \ref{tbl:benchmark_results}, with less than 40\% noise points, are printed in \textbf{bold}.}
\label{tab:clustering_results}
\resizebox{\textwidth}{!}{%
\begin{tabular}{|l|ccc|ccc|}
\hline
\textbf{Archetype} & \multicolumn{3}{c|}{\textbf{Convex}} & \multicolumn{3}{c|}{\textbf{Non-Convex}} \\
\hline
                  & \textbf{AMI} & \textbf{ARI} & \textbf{$\mathbf{p}_\text{noise}$} & \textbf{AMI} & \textbf{ARI} & \textbf{$\mathbf{p}_\text{noise}$} \\
\hline
twelve\_clusters\_different\_distributions & 0.796 (.045) & 0.705 (.066) & 0.269 (.033) & \textbf{0.605 (.016)} & \textbf{0.400 (.026)} & 0.353 (.007) \\
twelve\_different\_distributions\_high\_class\_imbalance & 0.733 (.054) & 0.572 (.083) & 0.238 (.042) & \textbf{0.560 (.024)} & \textbf{0.356 (.036)} & 0.346 (.016) \\
seven\_highly\_separated\_10d\_very\_different\_shapes & \textbf{0.992 (.006)} & 0.984 (.014) & 0.175 (.014) & 0.747 (.106) & 0.700 (.117) & 0.417 (.043) \\
seven\_very\_different\_shapes\_significant\_overlap & 0.595 (.110) & 0.617 (.124) & 0.909 (.020) & 0.163 (.050) & 0.149 (.066) & 0.744 (.072) \\
four\_clusters\_100d\_100\_samples\_each & 1.000 (.000) & 1.000 (.000) & 1.000 (.000) & 0.794 (.137) & 0.804 (.131) & 0.990 (.007) \\
four\_clusters\_100d\_1000\_samples\_each & 0.800 (.133) & 0.800 (.133) & 0.999 (.000) & 0.031 (.018) & 0.023 (.016) & 0.687 (.070) \\
\hline
\textbf{Average} & 0.819 (.035) & 0.780 (.040) & 0.599 (.049) & 0.483 (.047) & 0.405 (.047) & 0.590 (.036) \\
\hline
\end{tabular}%
}
\label{tbl:hdbscan_results}
\end{table*}

The results show that hierarchical clustering performs best on the convex cluster shapes, as long as there is sufficient separation between clusters. On the non-convex clusters, EM-GMM exhibits the strongest performance even though the clusters are no longer multivariate normal after applying \texttt{distort} (see Figure \ref{fig:archetype_tf}). 

K-Means and hierarchical clustering both hold up well on the high-dimensional data, including in the low-sample regime of 100 samples per cluster in 100D. By contrast, EM-GMM dramatically benefits from more samples on the high-dimensional data.

Spectral clustering does not display competitive performance in this benchmark.  While it improves over K-Means in some scenarios, it delivers weaker performance in high dimensions and never performs best across all algorithms.

HDBSCAN shows great difficulty in handling high dimensionality or significant overlap between clusters. However, the algorithm shows strong performance on the 12 non-convex clusters drawn from diverse distributions. We note again that HDBSCAN does not have access to the true number of clusters, in contrast with the other algorithms.

\subsection{Minimax Classification Error Captures Clustering Difficulty}

In Section \ref{sec:overlap}, we defined the overlap between two clusters in terms of the error rate of the best minimax linear classifier. We verify that this notion of overlap conveys clustering difficulty by measuring clustering performance on data sets with different degrees of overlap. For this simulation, we consider data sets with two clusters drawn from an archetype we described as ``two clusters with very different shapes in $p$D''. This verbal description yields an archetype with parameters
\begin{lstlisting}
{ `name': `two_very_different_shapes_|$p$|d', `n_clusters': 2, `dim': |$p$|, `n_samples': 200, `aspect_ref': 1.5, `aspect_maxmin': 3.0, `radius_maxmin': 3, `imbalance_ratio': 2, `max_overlap': 0.05, 'min_overlap': 0.001, `distributions': [`normal', `exponential'] },
\end{lstlisting}
where the dimensionality $p$ ranges across $[2, 10, 30, 100]$. We vary \texttt{max\_overlap} from $10^{-7}$ to 0.5, while setting  \texttt{min\_overlap} = \texttt{max\_overlap}/10. For each overlap setting, we generate 100 distinct data sets and evaluate the average clustering performance of hierarchical clustering, quantified in terms of adjusted mutual information (AMI) and adjusted Rand index (ARI), as in the benchmark of Section \ref{sec:benchmark}. We choose hierarchical clustering because it is computationally efficient and performed well in the benchmark. We repeat this process twice, where in the second run we make clusters non-convex by applying the \texttt{distort} function.

Figure \ref{fig:clustering_difficulty_convex} confirms that clustering difficulty rises with increasing overlap. Figure \ref{fig:clustering_difficulty_nonconvex} shows the same in the case of non-convex clusters, suggesting that applying \texttt{distort} maintains the desired relationship between overlap and clustering difficulty. Additionally, both figures show how our cluster overlap relates to the silhouette score, a popular metric for quantifying clustering difficulty (\citealp{silhouette}; \citealp{hawks}). At a fixed value of \texttt{max\_overlap}, the silhouette score decreases markedly with a rise in dimensionality. This is not an artifact of our overlap measure, since plotting clustering performance vs silhouette score shows a similar dependence on dimensionality (not shown). This makes sense since the silhouette score is based on the difference of Euclidean distances, and distances between points tend to become more similar in high dimensions (\citealp{HighDim}).

\begin{figure}[htbp]
    \centering
    % First plot
    \begin{subfigure}{0.32\textwidth}
        \centering
        \includegraphics[width=\textwidth]{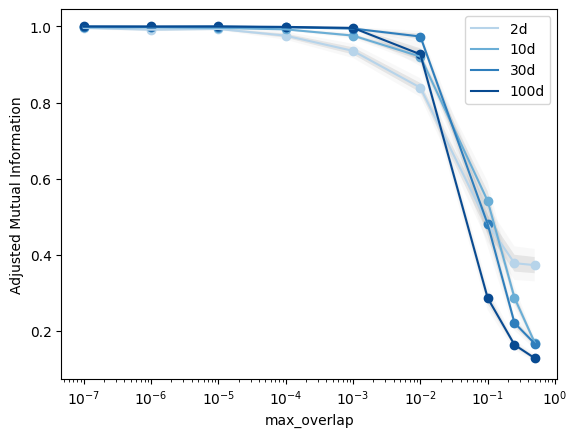}
        \caption{AMI vs Overlap}
    \end{subfigure}
    % Second plot
    \begin{subfigure}{0.32\textwidth}
        \centering
        \includegraphics[width=\textwidth]{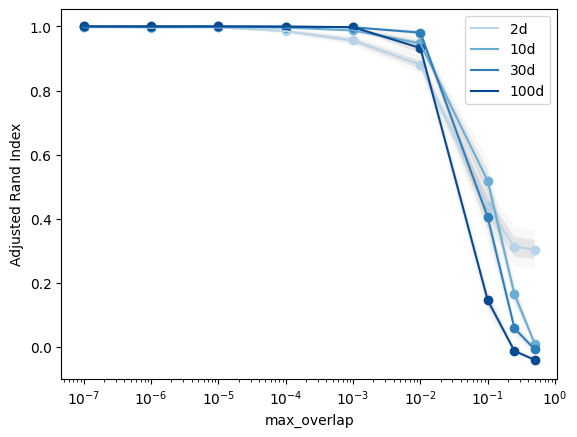}
        \caption{ARI vs Overlap}
    \end{subfigure}
    % Third plot
    \begin{subfigure}{0.32\textwidth}
        \centering
        \includegraphics[width=\textwidth]{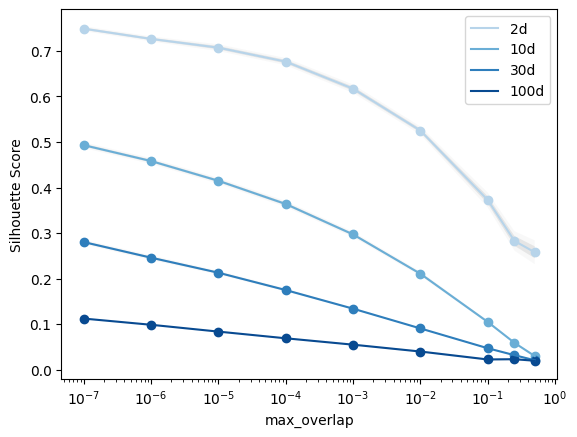}
        \caption{Silhouette vs Overlap}
    \end{subfigure}
    
    % Caption for all the plots
    \caption{Cluster overlap predicts clustering difficulty for convex clusters. Clustering performance is measured in terms of adjusted mutual information (AMI, left) and adjusted Rand index (ARI, middle). Right: the silhouette score is more sensitive to dimensionality but otherwise aligns well with our cluster overlap.}
    \label{fig:clustering_difficulty_convex}
\end{figure}

\begin{figure}[htbp]
    \centering
    % First plot
    \begin{subfigure}{0.32\textwidth}
        \centering
        \includegraphics[width=\textwidth]{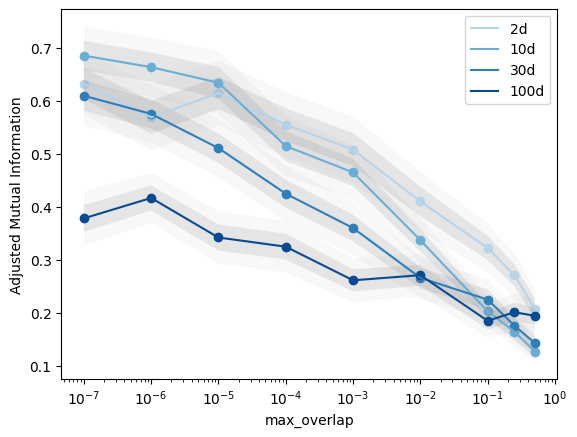}
        \caption{AMI vs Overlap}
    \end{subfigure}
    % Second plot
    \begin{subfigure}{0.32\textwidth}
        \centering
        \includegraphics[width=\textwidth]{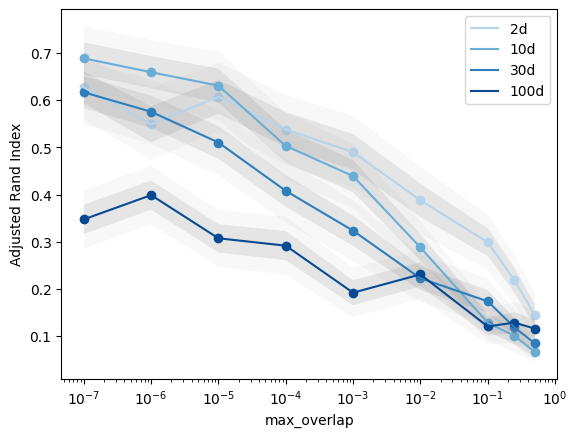}
        \caption{ARI vs Overlap}
    \end{subfigure}
    % Third plot
    \begin{subfigure}{0.32\textwidth}
        \centering
        \includegraphics[width=\textwidth]{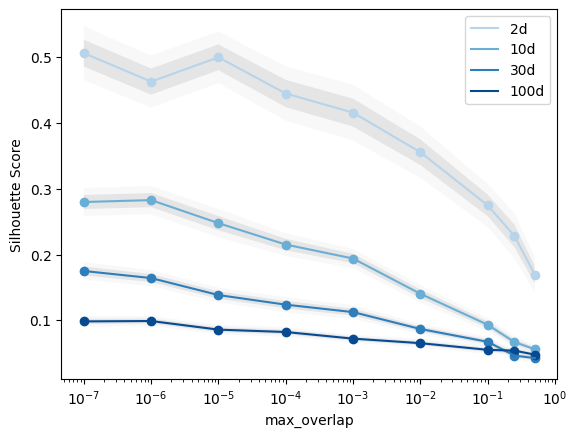}
        \caption{Silhouette vs Overlap}
    \end{subfigure}
    
    % Caption for all the plots
    \caption{Cluster overlap predicts clustering difficulty for non-convex clusters. Clustering performance is measured in terms of adjusted mutual information (AMI, left) and adjusted Rand index (ARI, middle). Right: the silhouette score is more sensitive to dimensionality but otherwise aligns well with our cluster overlap.}
    \label{fig:clustering_difficulty_nonconvex}
\end{figure}

\subsubsection{Examining the Distribution of Pairwise Overlaps}
\label{sec:overlap_distributions}

Since \texttt{repliclust} controls cluster overlap on the level of entire data sets by setting two global parameters (\texttt{max\_overlap} and \texttt{min\_overlap}), it is worthwhile to investigate the distributions of pairwise overlaps on datasets with multiple clusters. Figure \ref{fig:overlap_distributions} shows the distribution of pairwise overlaps for six data set archetypes, confirming that setting \texttt{max\_overlap} and \texttt{min\_overlap} effectively controls the pairwise overlaps between clusters.

\begin{figure}[h!]
\centering
    \includegraphics[width=\textwidth]{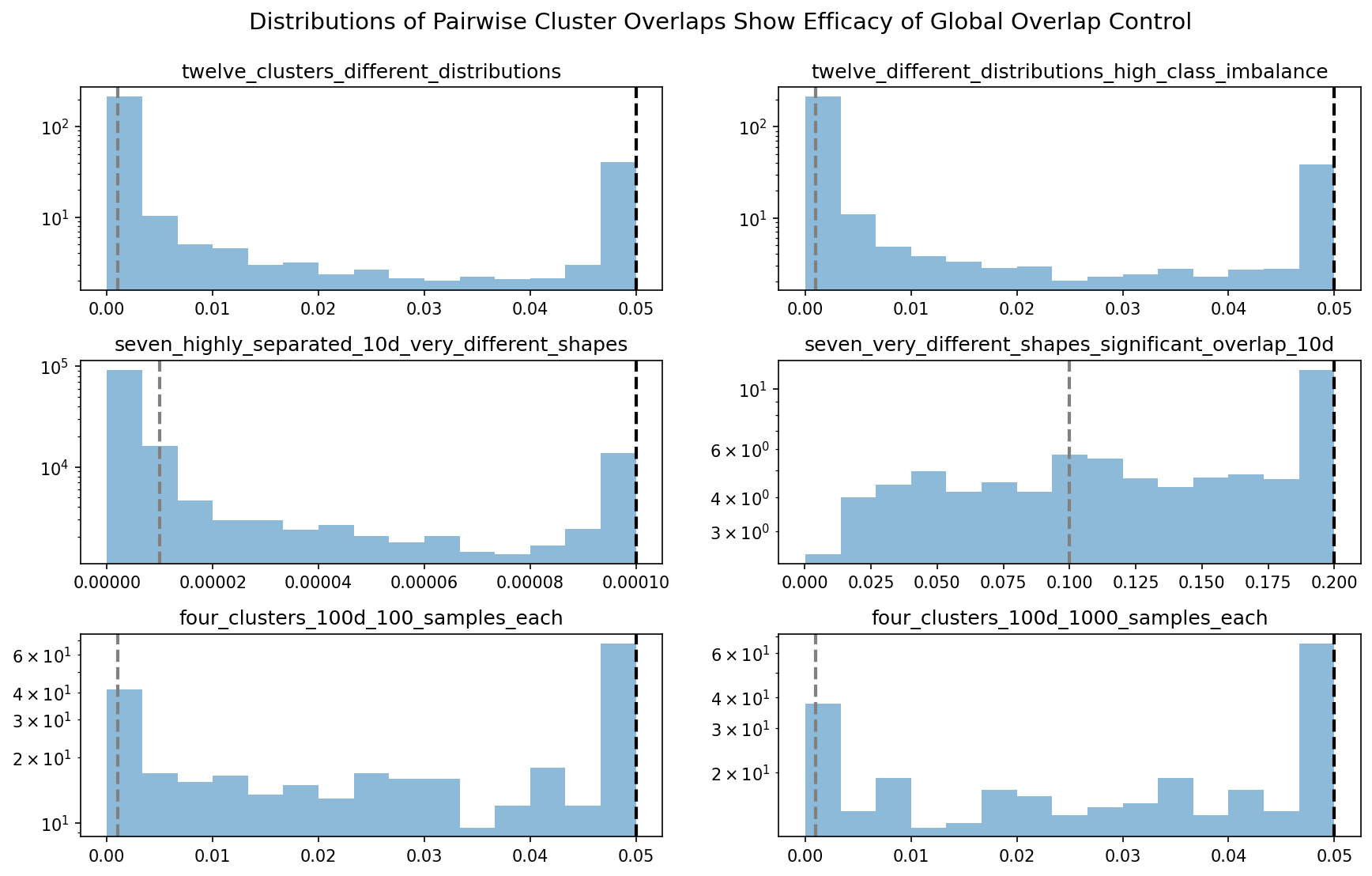}
    \caption{Distributions of pairwise overlaps between clusters reveal that global overlap control works well. The black dashed line indicates the \texttt{max\_overlap} setting, and the gray dashed line indicates \texttt{min\_overlap}. Note that \texttt{min\_overlap} is not a lower bound on pairwise overlap because it only requires that each cluster share the minimum degree of overlap with \textit{at least one} other cluster; overlaps with other clusters may be lower.}
    \label{fig:overlap_distributions}
\end{figure}

\section{Related Work}
\label{sec:relatedwork}

Simulations on synthetic data play an important role in cluster analysis. Accordingly, many synthetic data generators for cluster analysis have been proposed. However, the key idea in this paper is to specify the overall geometric characteristics of synthetic data in a high-level manner via data set archetypes. By contrast, previous data generators have put the onus on the user to \textit{design} the overall geometric structure by \textit{carefully tuning} lower-level properties of individual clusters.

In the following overview, we mainly focus on general-purpose generators. However, the literature has also proposed more specialized solutions to fill specific needs in the community. For example, \cite{subclugen} present a data generator for subspace clustering, \cite{dbscanrevisited} evaluates density-based clustering using a data generation process based on random walks, and \cite{handl05} focus on creating long and thin ellipsoidal clusters in higher dimensions. None of these contributions focus on giving high-level control over the overall geometry of the data sets.

\cite{milligan1985} implements a generator for generating several clusters in up to 8 dimensions. The method enforces an upper bound on cluster overlap by limiting overlap in the first dimension, but does not otherwise provide control over high-level geometric structure. 

\cite{peizaiane06} present a compelling software system for generating two-dimensional clusters, which creates data sets with specified clustering difficulty ``easy'', ``medium'' or ``hard''; ``easy'' data consists only of spherical/convex clusters, whereas ``medium'' and ``hard'' data include curve segments and special shapes like letters of the alphabet. The generator does not offer high-level control over data set geometry except for the difficulty scale and the density of noise points to add to the data. The software comes with an appealing graphical user interface.

The popular scikit-learn library for machine learning (\citealp{scikit-learn}) offers several functions for creating synthetic clusters. Among these, some are aimed at reproducing canonical 2D toy data sets like concentric circles and nested moons (\texttt{make\_moons}, \texttt{make\_circles}), while others focus on sampling multivariate normal clusters. These functions offer some valuable features, such as the ability to create datasets with informative and redundant features, as in the \texttt{make\_classification  function}. However, they do not control overall geometric characteristics of the data.

The data mining framework ELKI (\citealp{elki}) provides a synthetic data generator based on specifying probabilistic models in XML files. This XML-based approach makes it easy to reproduce benchmarks by sharing the underlying XML files. Drawing inspiration from this work, we have implemented an \texttt{Archetype.describe} function allowing users to easily share collections of data set archetypes as JSONL files.

The generators OCLUS (\citealp{oclus}) and GenRandomClust (\citealp{qiujoe06}) focus on providing more sophisticated overlap control compared to previous generators. GenRandomClust extends the generator of \citealp{milligan1985} by managing overlaps between clusters with different ellipsoidal shapes and arbitrary spatial orientations. Similar to our classification error-based notion of cluster overlap, their method finds an optimal separation direction between two clusters. To enforce the desired degree of overlap, the algorithm initially places cluster centers on a scaffold, then scales individual edges of the scaffold up or down to meet the overlap constraint. The method supports making \textit{all} cluster shapes more or less elongated, but does not otherwise provide high-level control over data set geometry. Moreover, the scaling operations undertaken to manage cluster overlaps implicitly sacrifice control over cluster volumes (which, in \texttt{repliclust}, can be managed independently).

OCLUS (\citealp{oclus}) quantifies cluster overlap in terms of the shared density between two clusters. The generator uses analytical formulas for integrals of several interesting probability distributions (including exponential, gamma, and chi-square), thereby effectively managing overlaps between non-normal clusters. As a result, the method is limited to treating all dimensions independently, so that cluster distributions simplify as the products of marginals. The paper contains a valuable discussion of the distinction between marginal and joint cluster overlaps (of which their software supports both).

Like other existing generators, OCLUS does not aspire to helping the user establish the overall geometric characteristics of synthetic data sets. To generate a data set, the user must provide a covariance matrix for each cluster, the desired overlaps between all pairs of clusters, and a design matrix specifying which clusters overlap at all. In sum, generating 10 clusters in 10D requires supplying over 500 numbers, compared to only a handful in \texttt{repliclust} (or none if the user chooses to describe the archetype in English).

MDCGen (\citealp{mdcgen} is a feature-rich generator that supports many desiderata in cluster analysis, such as overlap control, different probability distributions, subspace clusters, and the ability to add noise points. In particular, it is nice to be able to place noise points away from the clusters, which is made possible by the grid-based strategy for placing cluster centers. MDCGen does not target the overall geometric characteristics of synthetic data sets, instead giving users low-level control enabling extensive configurability. For example, managing the overlap between clusters involves setting compactness coefficients, grid granularity, and overall scale, compared to only tweaking $\texttt{max\_overlap}$ in \texttt{repliclust} (\texttt{min\_overlap} may also have to be tweaked but it can usually stay at \texttt{max\_overlap}/10 or a similar value). In the words of the authors, ``to enable covering a broad range of dataset possibilities, the parameters are multiple and some training for tuning the tool is required.''

Finally, the HAWKS generator (\citealp{hawks}) controls cluster overlaps using an evolutionary algorithm that evolves the means and covariance matrices of multivariate normal distributions. The paper applies this framework to create data sets with a user-specified silhouette score representing clustering difficulty (\citealp{silhouette}). In principle, the evolutionary framework can be extended to attain desired high-level geometric characteristics. Of these, the authors consider two examples, cluster overlaps and elongations (the latter relating to our notion of cluster aspect ratio, as listed in Table \ref{tbl:archetype_params}). An interesting aspect of HAWKS is the ability to generate data sets that maximize the performance difference between two clustering algorithms. This feature is especially useful in two dimensions, since we can then visually examine the data sets to better understand when each algorithm succeeds or fails.

\section{Conclusion}
\label{sec:conclusion}

In this paper, we have presented an archetype-based approach to synthetic data generation for cluster analysis. Our method works by summarizing the overall geometric characteristics of a probabilistic mixture model with a few high-level parameters, and sampling mixture models subject to these constraints. To convey the convenience and informativeness of such an archetype-based approach, we have implemented a natural language interface allowing users to create archetypes purely from verbal descriptions in English. Thus, our software \texttt{repliclust} makes it possible to run an entire benchmark by describing the desired evaluation scenarios in English.

Although our data generator relies on creating a skeleton of convex, ellipsoidal clusters, we have implemented ways to make the cluster shapes more irregular and complex. The first method passes convex clusters through a randomly initialized neural network, making their shapes non-convex and irregular. The second method creates directional datasets by wrapping $p$-dimensional convex clusters around the $(p+1)$-dimensional sphere.

In future work, we are most interested in learning data set archetypes that mimic the geometric characteristics of real data. In application domains where multivariate Gaussians provide a good model for the data, it would suffice to fit a Gaussian mixture model, empirically measure its high-level geometric parameters (as listed in Table \ref{tbl:archetype_params}), then create a synthetic data set archetype with these parameters. For application domains with non-convex clusters, it would be interesting to implement a generalized version of this approach. We can imagine training an auto-encoder type neural network that maps non-convex clusters into a hidden space where they become multivariate Gaussian, then undoes the transformation. Defining an archetype based on the high-level geometric characteristics in the hidden space would presumably allow us to sample irregularly shaped clusters that look similar to those found in real data.

\section*{Acknowledgements}
We thank the anonymous referees for their feedback, which has significantly improved the paper. In addition, MJZ would like to acknowledge support from Matt Thomson and members of the Thomson Lab at Caltech. Finally, MJZ thanks Dante Roy Calapate for providing his computer monitor during the early stages of this project.

\section*{Compliance with Ethical Standards}

The project did not involve any studies with human or animal participants.

\section*{Conflicts of Interest}

The authors report no conflicts of interest.

\section*{Funding}

This project is not tied to a specific grant.

\section*{Data Availability}
Data and code employed in writing the paper is available on GitHub in the repository \texttt{github.com/mzelling/repliclust-analysis}.

\vskip 0.2in

\nocite{kmeans}
\nocite{matplotlib}
\nocite{scipy}
\nocite{numpy}
\nocite{dbscanrevisited}
\nocite{subclugen}
\nocite{lecunbackprop}
\nocite{layernorm}

\bibliography{main}

\appendix
\setcounter{theorem}{2}

\newpage

\section*{Appendix A}

We give more detail on how \texttt{repliclust} manages various geometric attributes using max-min parameters. Table \ref{tbl:maxminattributes} lists all geometric attributes managed with max-min sampling and names the corresponding parameters in \texttt{repliclust}. The \textit{reference value} for each geometric parameter serves as a location constraint, while the \textit{max-min ratio} determines the spread. A \textit{constraint} ensures that the distribution of geometric parameters within a data set is similar across data sets drawn from the same archetype.

\begin{table}[h!]
	\caption{Summary of geometric attributes managed with max-min sampling. The second and third columns indicate whether each max-min ratio or reference value is inferred, or specified by the user as a \texttt{parameter}. The fourth column gives the location constraint used during max-min sampling. The \textit{group size} of a cluster is the number of data points in it; the \textit{aspect ratio} is the ratio of the lengths of the longest cluster axis to the shortest. For cluster volumes, we specify the reference value and max-min ratio in terms of \textit{radius} (\texttt{dim}-th root of volume) since volumes grow rapidly in high dimensions.}
  	\label{tbl:maxminattributes}
  	\centering
  	\centerline{
\resizebox{\textwidth}{!}{
    \begin{tabular}{ | c | c | c | c |}
      \hline
      \textbf{Geometric Attribute} & \textbf{Max-Min Ratio} & \textbf{Reference Value} & \textbf{Constraint} \\
      \hline
      cluster volumes & \makecell{\texttt{radius\_maxmin}} & \makecell{\texttt{scale}} & \makecell{cluster volumes average \\to reference volume}\\
      \hline
      group sizes & \texttt{imbalance\_ratio} & \makecell{average group size} & \makecell{group sizes sum to \\ number of samples}\\
      \hline
      cluster aspect ratios & {\texttt{aspect\_maxmin}} & \makecell{\texttt{aspect\_ref}} &  \makecell{geometric mean of aspect \\ ratios equals reference}\\
      \hline
      \makecell{cluster axis lengths} & \makecell{aspect ratio \\ of the cluster} & \makecell{\texttt{dim}-th root of \\ cluster volume} &  \makecell{geometric mean of lengths \\ equals reference length}\\
      \hline
    \end{tabular}
    }
    }
\end{table}

To enforce the attribute-specific constraints, \texttt{repliclust} samples new values of a geometric parameter in pairs. The first value is randomly drawn from a triangular distribution whose mode and endpoints are determined from the typical value and max-min ratio. The second value is then deterministically computed to maintain the constraint. For reference, see the \texttt{sample\_with\_maxmin} function in the \texttt{maxmin.utils} module (Version 1.0.0 of \texttt{repliclust}).

\section*{Appendix B}

Table \ref{tbl:mixturemodel} lists the formal attributes of a mixture model in \texttt{repliclust}. A data set archetype provides a way to randomly sample mixture models with similar overall geometric characteristics. Thus, an archetype implicitly defines a probability distribution over the attributes in Table \ref{tbl:mixturemodel}.

\begin{table}[h!]
	\caption{Formal attributes of a mixture model in \texttt{repliclust}.}
  	\label{tbl:mixturemodel}
  	\centering
  	\centerline{
  	\resizebox{\textwidth}{!}{
    \begin{tabular}{ | c | c | c |}
      \hline
      \textbf{Attribute} & \textbf{Meaning} & \textbf{Mathematical Definition} \\
      \hline
      \makecell{cluster centers} & the positions of cluster centers in space & \makecell{$\boldsymbol{\mu}_1, \boldsymbol{\mu}_2, ..., \boldsymbol{\mu}_k \in \mathbb{R}^p$} \\
      \hline
      \makecell{principal axis orientations} & \makecell{the spatial orientation of each cluster's \\ ellipsoidal shape (different for each cluster) } & \makecell{orthonormal matrices \\ $\mathbf{U}_1, \mathbf{U}_2, ..., \mathbf{U}_k \in \mathbb{R}^{p \times p}$ } \\
      \hline
      \makecell{principal axis lengths} & \makecell{the lengths of each cluster's principal axes \\ (axes have different lengths between and within clusters)} & \makecell{$\boldsymbol{\sigma}_1, \boldsymbol{\sigma}_2, ..., \boldsymbol{\sigma}_k \in {(\mathbb{R}^{>0}})^p$}\\
      \hline
      \makecell{cluster distributions} & \makecell{multivariate probability distributions for \\ generating data (different for each cluster)} & \makecell{distributions $\mathbb{P}_1, \mathbb{P}_2, ..., \mathbb{P}_k$}\\
      \hline
    \end{tabular}
    }
    }
\end{table}

\section*{Appendix C}

We prove Theorem \ref{thm:ldaoverlap} of Section \ref{sec:overlap}. Additionally, we provide an analogous result for the simpler ``center-to-center'' approximation of cluster overlap.
\begingroup
\addtocounter{theorem}{-2}

\begin{theorem}[LDA-Based Cluster Overlap]
For two multivariate normal clusters with means $\boldsymbol{\mu}_1 \neq \boldsymbol{\mu}_2$ and covariance matrices $\boldsymbol{\Sigma}_1, \boldsymbol{\Sigma}_2$, the approximate cluster overlap $\alpha_\text{\tiny{LDA}}$ based on the linear separator $\boldsymbol{a}_\text{\tiny{LDA}} = (\frac{\boldsymbol{\Sigma}_1 + \boldsymbol{\Sigma}_2}{2})^{-1}(\boldsymbol{\mu}_2 - \boldsymbol{\mu}_1)$ is 
\begin{equation}
\label{eq:thmresult_lda}
\alpha_\text{\tiny{LDA}} = 2\big(1 - \Phi \Big( \frac{\boldsymbol{a}_\text{\tiny{LDA}}^{\top} (\boldsymbol{\mu}_2 - \boldsymbol{\mu}_1)}{\sqrt{\boldsymbol{a}_\text{\tiny{LDA}}^{\top} \boldsymbol{\Sigma}_1 \boldsymbol{a}_\text{\tiny{LDA}}} + \sqrt{\boldsymbol{a}_\text{\tiny{LDA}}^{\top}\boldsymbol{\Sigma}_2\boldsymbol{a}_\text{\tiny{LDA}}}} \Big) \big),
\end{equation}
where $\Phi(z)$ is the cumulative distribution function of the standard normal distribution. 
Moreover, if $\boldsymbol{\Sigma}_1 = \lambda \boldsymbol{\Sigma}_2$ for some $\lambda$ then $\alpha_\text{\tiny{LDA}}$ equals the exact cluster overlap $\alpha$.
\end{theorem}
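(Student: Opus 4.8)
The plan is to exploit the fact that, once the classification axis $\boldsymbol{a}$ is fixed, computing the overlap reduces to a one-dimensional problem. Projecting each multivariate normal cluster onto $\boldsymbol{a}$ yields univariate normals $\mathcal{N}(\boldsymbol{a}^{\top}\boldsymbol{\mu}_i,\, \boldsymbol{a}^{\top}\boldsymbol{\Sigma}_i\boldsymbol{a})$ for $i = 1,2$. I would first locate the minimax threshold $c^{*}$ along this fixed axis by equating the two class-conditional error rates, then read off the resulting overlap.

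For the formula, write $m_i = \boldsymbol{a}^{\top}\boldsymbol{\mu}_i$ and $s_i = \sqrt{\boldsymbol{a}^{\top}\boldsymbol{\Sigma}_i\boldsymbol{a}}$. Since $(\frac{\boldsymbol{\Sigma}_1+\boldsymbol{\Sigma}_2}{2})^{-1}$ is positive definite and $\boldsymbol{\mu}_1 \neq \boldsymbol{\mu}_2$, we have $\boldsymbol{a}_\text{\tiny{LDA}}^{\top}(\boldsymbol{\mu}_2 - \boldsymbol{\mu}_1) > 0$, so $m_1 < m_2$. The equal-error condition $1 - \Phi((c - m_1)/s_1) = \Phi((c - m_2)/s_2)$, combined with $1 - \Phi(z) = \Phi(-z)$, yields the linear equation $-(c - m_1)/s_1 = (c - m_2)/s_2$, whose solution is $c^{*} = (s_2 m_1 + s_1 m_2)/(s_1 + s_2)$. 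Substituting back, the common error rate collapses to $1 - \Phi((m_2 - m_1)/(s_1 + s_2))$, and doubling it per the overlap definition recovers equation~(\ref{eq:thmresult_lda}) after re-expanding $m_i$ and $s_i$ in terms of $\boldsymbol{a}_\text{\tiny{LDA}}$. This portion is a routine calculation.

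For the exactness claim, the key observation is that the one-dimensional derivation above applies verbatim to \emph{any} fixed axis, so the exact overlap --- obtained by minimizing over all linear classifiers --- equals $2(1 - \Phi(\max_{\boldsymbol{a}} q(\boldsymbol{a})))$, with $q$ the separation quantile of equation~(\ref{eq:overlap_metric}). It therefore suffices to show that $\boldsymbol{a}_\text{\tiny{LDA}}$ maximizes $q$ when $\boldsymbol{\Sigma}_1 = \lambda\boldsymbol{\Sigma}_2$. Under this assumption the denominator of $q$ factors as $(\sqrt{\lambda}+1)\sqrt{\boldsymbol{a}^{\top}\boldsymbol{\Sigma}_2\boldsymbol{a}}$, reducing $q$ to a positive scalar multiple of the generalized Rayleigh quotient $\boldsymbol{a}^{\top}(\boldsymbol{\mu}_2 - \boldsymbol{\mu}_1)/\sqrt{\boldsymbol{a}^{\top}\boldsymbol{\Sigma}_2\boldsymbol{a}}$. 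Writing $\boldsymbol{a}^{\top}(\boldsymbol{\mu}_2 - \boldsymbol{\mu}_1) = (\boldsymbol{\Sigma}_2^{1/2}\boldsymbol{a})^{\top}(\boldsymbol{\Sigma}_2^{-1/2}(\boldsymbol{\mu}_2 - \boldsymbol{\mu}_1))$ and applying Cauchy--Schwarz shows this quotient is maximized exactly when $\boldsymbol{a} \propto \boldsymbol{\Sigma}_2^{-1}(\boldsymbol{\mu}_2 - \boldsymbol{\mu}_1)$. Since $\boldsymbol{\Sigma}_1 = \lambda\boldsymbol{\Sigma}_2$ makes $\boldsymbol{a}_\text{\tiny{LDA}} = \frac{2}{1+\lambda}\boldsymbol{\Sigma}_2^{-1}(\boldsymbol{\mu}_2 - \boldsymbol{\mu}_1)$, and $q$ is invariant under positive rescaling of $\boldsymbol{a}$, the LDA axis attains the maximum, giving $\alpha_\text{\tiny{LDA}} = \alpha$.

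I expect the main obstacle to be the conceptual reduction underpinning the second part: justifying that the exact minimax overlap equals $2(1 - \Phi(\max_{\boldsymbol{a}} q(\boldsymbol{a})))$. This requires splitting the minimax problem~(\ref{eq:minimax}) into an inner minimization over the threshold $c$ --- which produces the equal-error value $1 - \Phi(q(\boldsymbol{a}))$ for each fixed direction --- followed by an outer minimization over directions, equivalent to maximizing $q$. Once this reduction is in place, the remaining Cauchy--Schwarz step is standard.
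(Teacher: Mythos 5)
Your proof is correct, and its first half (fix the axis $\boldsymbol{a}_\text{\tiny{LDA}}$, equalize the two tail error rates to find the minimax threshold, read off the common error $1-\Phi\big((m_2-m_1)/(s_1+s_2)\big)$, and double it) is essentially identical to the paper's argument, just with the threshold $c^{*}$ solved explicitly rather than stated as a quantile-matching condition. Where you genuinely depart from the paper is the exactness claim for $\boldsymbol{\Sigma}_1 = \lambda\boldsymbol{\Sigma}_2$. The paper disposes of it in two sentences by asserting that maximum likelihood classification then yields a linear boundary coinciding with the LDA solution; strictly read, that assertion only holds for $\lambda = 1$, since for $\lambda \neq 1$ the log-likelihood ratio retains the quadratic term $\tfrac{1}{2}(1-\lambda^{-1})\mathbf{x}^{\top}\boldsymbol{\Sigma}_2^{-1}\mathbf{x}$ and the ML boundary is a quadric, so the paper's reasoning has a gap there. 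Your route avoids this entirely: you reduce the minimax problem over all linear classifiers to maximizing the separation quantile $q(\boldsymbol{a})$ (inner minimization over the threshold gives the equal-error value for each direction, outer minimization over directions becomes maximization of $q$ since $\Phi$ is increasing), then show via Cauchy--Schwarz on the generalized Rayleigh quotient that $\boldsymbol{a} \propto \boldsymbol{\Sigma}_2^{-1}(\boldsymbol{\mu}_2-\boldsymbol{\mu}_1)$ is the maximizer, which is exactly $\boldsymbol{a}_\text{\tiny{LDA}}$ up to positive scaling. This is self-contained, works uniformly in $\lambda$, and operates directly on the paper's own definition of exact overlap (minimax over \emph{linear} classifiers) rather than invoking properties of the unrestricted Bayes/ML rule; the price is the extra bookkeeping of the reduction $\alpha = 2\big(1-\Phi(\max_{\boldsymbol{a}} q(\boldsymbol{a}))\big)$, which you correctly identify and justify. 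In short: same first half, but your second half is both a different argument and a more rigorous one than the paper's.
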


\begin{proof}
Let $\boldsymbol{a}_\text{\tiny{LDA}}$ be the classification axis. Minimax optimality requires that the cluster-specific misclassification probabilities are equal. Since $\boldsymbol{a}_\text{\tiny{LDA}}$ is the classification axis, these probabilities correspond to the tails of the marginal distributions along $\boldsymbol{a}_\text{\tiny{LDA}}$. Specifically, let
\begin{equation}
\sigma_1 = \sqrt{\boldsymbol{a}_\text{\tiny{LDA}}^\top \Sigma_1 \boldsymbol{a}_\text{\tiny{LDA}}}
\end{equation}
be the standard deviation of cluster 1's marginal distribution along $\boldsymbol{a}_\text{\tiny{LDA}}$, where $\Sigma_1$ is the cluster's covariance matrix; $\sigma_2$ is defined analogously. If $\boldsymbol{a}_\text{\tiny{LDA}}$ is oriented to point from cluster 1 to cluster 2, then the $1-\alpha/2$ quantile of cluster 1's marginal distribution meets the $\alpha/2$ quantile of cluster 2's marginal distribution at the decision boundary, where $\alpha$ is the unknown cluster overlap. This intersection implies
\begin{equation}
    \boldsymbol{\mu}_1^\top \boldsymbol{a}_\text{\tiny{LDA}}  + q_{1-\alpha/2}\sigma_1 =  \boldsymbol{\mu}_2^\top \boldsymbol{a}_\text{\tiny{LDA}}  + q_{\alpha/2}\sigma_2, 
\end{equation}
where $q_{\xi}$ is the $\xi$-quantile of the standard normal distribution.
Rearranging this equation, and using $q_{\alpha/2} = -q_{1-\alpha/2}$ and $\Phi(q_\xi) = \xi$, gives (\ref{eq:thmresult_lda}).

Next, suppose that $\boldsymbol{\Sigma}_1 = \lambda \boldsymbol{\Sigma}_2$ for some $\lambda$. In this case, maximum likelihood classification results in a linear decision boundary that coincides with the LDA solution. Hence, the minimax-optimal linear classifier uses the LDA-based classification axis $\boldsymbol{a}_\text{\tiny{LDA}}$.
\end{proof}

\begin{theorem}[Center-to-Center Cluster Overlap]
For two multivariate normal clusters with means $\boldsymbol{\mu}_1 \neq \boldsymbol{\mu}_2$ and covariance matrices $\boldsymbol{\Sigma}_1, \boldsymbol{\Sigma}_2$, the center-to-center cluster overlap $\alpha_\text{\tiny{C2C}}$, based on a classification boundary perpendicular to the line connecting the cluster centers, is 

\begin{equation}
\alpha_\text{\tiny{C2C}} = 2\big(1 - \Phi \Big( \frac{\boldsymbol{\delta}^{\top} \boldsymbol{\delta}} {\sqrt{\boldsymbol{\delta}^{\top} \boldsymbol{\Sigma}_1 \boldsymbol{\delta}} + \sqrt{\boldsymbol{\delta}^{\top}\boldsymbol{\Sigma}_2\boldsymbol{\delta}}} \Big) \big),
\end{equation}
where $\boldsymbol{\delta} := \boldsymbol{\mu}_2 - \boldsymbol{\mu}_1$ is the difference between cluster centers and $\Phi(z)$ is the cumulative distribution function of the standard normal distribution. 

Moreover, if the covariance matrices $\boldsymbol{\Sigma}_1$ and $\boldsymbol{\Sigma}_2$ are both multiples of the identity matrix, then $\alpha_\text{\tiny{C2C}}$ equals the exact cluster overlap $\alpha$.
\end{theorem}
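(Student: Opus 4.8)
The plan is to mirror the proof of Theorem~\ref{thm:ldaoverlap}, since the center-to-center classifier is nothing but the special case of a linear classifier whose axis is $\boldsymbol{a}_\text{\tiny{C2C}} = \boldsymbol{\delta} = \boldsymbol{\mu}_2 - \boldsymbol{\mu}_1$. First I would observe that the quantile derivation in the LDA proof never used the specific algebraic form of the classification axis: it used only that, along a fixed axis $\boldsymbol{a}$, the two marginal distributions are normal with means $\boldsymbol{\mu}_i^\top \boldsymbol{a}$ and standard deviations $\sqrt{\boldsymbol{a}^\top \boldsymbol{\Sigma}_i \boldsymbol{a}}$. Requiring the $1-\alpha/2$ quantile of cluster~1 to meet the $\alpha/2$ quantile of cluster~2 at the decision boundary then yields $\alpha = 2\big(1 - \Phi(q)\big)$ with $q = q(\boldsymbol{\mu}_1,\boldsymbol{\mu}_2;\boldsymbol{\Sigma}_1,\boldsymbol{\Sigma}_2;\boldsymbol{a})$ as in Equation~(\ref{eq:overlap_metric}). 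Substituting $\boldsymbol{a} = \boldsymbol{\delta}$ and noting that the numerator becomes $\boldsymbol{a}^\top(\boldsymbol{\mu}_2 - \boldsymbol{\mu}_1) = \boldsymbol{\delta}^\top \boldsymbol{\delta}$ produces the displayed formula directly; since $\boldsymbol{\delta}$ already points from cluster~1 toward cluster~2, the orientation convention from the earlier proof is automatically met.

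For the exactness claim I would reduce to the corresponding statement of Theorem~\ref{thm:ldaoverlap} rather than re-derive the minimax axis from scratch. Suppose $\boldsymbol{\Sigma}_1 = c_1 \mathbf{I}$ and $\boldsymbol{\Sigma}_2 = c_2 \mathbf{I}$. Then $\boldsymbol{a}_\text{\tiny{LDA}} = \big(\tfrac{c_1 + c_2}{2}\mathbf{I}\big)^{-1}(\boldsymbol{\mu}_2 - \boldsymbol{\mu}_1) = \tfrac{2}{c_1 + c_2}\,\boldsymbol{\delta}$ is a strictly positive scalar multiple of $\boldsymbol{a}_\text{\tiny{C2C}} = \boldsymbol{\delta}$. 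Since the separation quantile $q$ is invariant under positive rescaling of its axis argument—both numerator and denominator scale by the same positive factor—it follows that $\alpha_\text{\tiny{C2C}} = \alpha_\text{\tiny{LDA}}$. Finally, because $\boldsymbol{\Sigma}_1 = (c_1/c_2)\,\boldsymbol{\Sigma}_2$ has the form $\boldsymbol{\Sigma}_1 = \lambda\,\boldsymbol{\Sigma}_2$, the second part of Theorem~\ref{thm:ldaoverlap} gives $\alpha_\text{\tiny{LDA}} = \alpha$, the exact overlap. Chaining the two equalities yields $\alpha_\text{\tiny{C2C}} = \alpha$.

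The first part is essentially bookkeeping—a verbatim specialization of the LDA derivation—so I expect no difficulty there. The only step requiring care is the exactness argument, where the cleanest route is this reduction to the LDA theorem via scale-invariance of $q$, rather than directly verifying that the center-to-center axis is minimax-optimal under isotropic covariances. A self-contained alternative would invoke the rotational symmetry of the configuration about the line through the two means, which forces the optimal linear boundary to be perpendicular to $\boldsymbol{\delta}$; but leaning on the already-established LDA result is shorter and avoids re-proving optimality of the axis.
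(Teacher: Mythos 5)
Your proposal is correct and follows essentially the same route as the paper: the formula comes from specializing the LDA quantile derivation to the axis $\boldsymbol{a} = \boldsymbol{\delta}$, and exactness follows because under isotropic covariances $\boldsymbol{\delta}$ is a scalar multiple of $\boldsymbol{a}_\text{\tiny{LDA}}$, so the second part of Theorem \ref{thm:ldaoverlap} applies. If anything, you are slightly more careful than the paper, which leaves implicit both the scale-invariance of the separation quantile $q$ and the observation that $\boldsymbol{\Sigma}_1 = (c_1/c_2)\boldsymbol{\Sigma}_2$ is what licenses the appeal to that second part.
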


\begin{proof}
The proof proceeds along the same lines as the proof of Theorem \ref{thm:ldaoverlap}, except that the classification axis is $\boldsymbol{\mu}_2 - \boldsymbol{\mu}_1$. If both covariance matrices are multiples of the identity matrix, $\boldsymbol{\mu}_2 - \boldsymbol{\mu}_1$ is a scalar multiple of the LDA-based classification axis $\boldsymbol{a}_\text{\tiny{LDA}}$. Hence, the second part of Theorem \ref{thm:ldaoverlap} kicks in to establish equality between $\alpha_\text{\tiny{C2C}}$ and the exact overlap.
\end{proof}
\endgroup

\section*{Appendix D}

Figure \ref{fig:viz_nonnormal_overlap} visualizes two-dimensional data sets created from the same archetype but with different radial probability distributions. The results suggest that pegging the 68.15\% quantile of the radial distribution at unity leads to satisfactory overlap control for distributions with infinite support. However, for distributions with bounded support (such as the beta distribution), this approach leads to greater separation between the clusters, as shown in the rightmost column of Figure \ref{fig:viz_nonnormal_overlap}.

\begin{figure}[h!]
\centering
    \includegraphics[width=\textwidth]{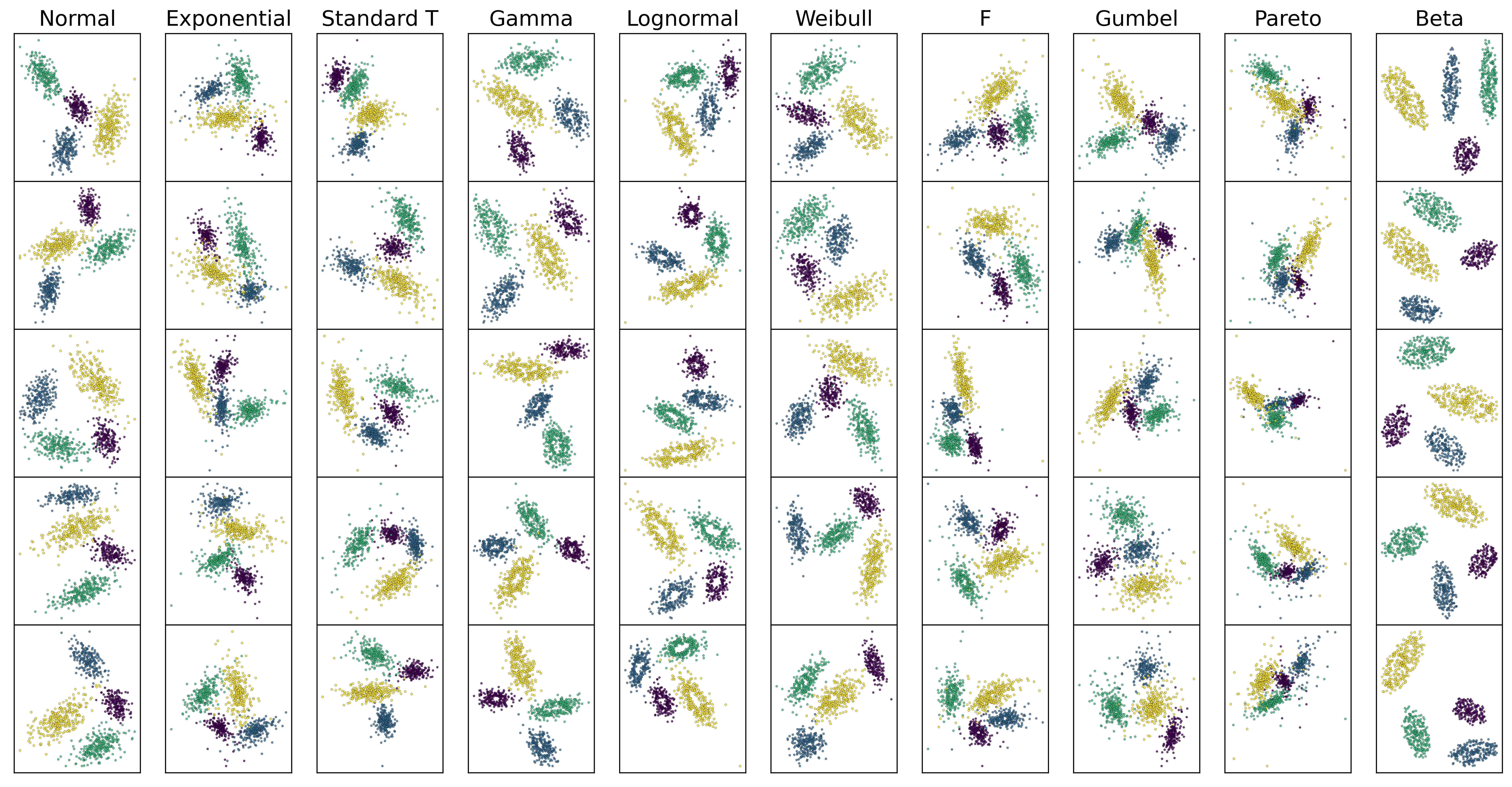}
    \caption{Overlap control works well for non-normal probability distributions with infinite support. The data sets shown are generated from the same archetype in 2D (with overlap at around 1\%), except that we change the probability distribution in each column. The only distribution that violates $1\%$ overlap is the beta distribution, which unfortunately generalizes to other distributions with bounded support. Note that the heavy-tailed distributions (Pareto, F, ...) appear smaller on scatter plots because they give rise to outliers.}
    \label{fig:viz_nonnormal_overlap}
\end{figure}

\section*{Appendix E}

The default architecture for the neural network used in the \texttt{distort} function of Section \ref{sec:realism} is a feed-forward network consisting (in order) of a linear embedding, 16 repeated feed-forward blocks (each consisting of a fully connected linear layer followed by layer normalization and Tanh activation), and a linear projection. Importantly, we tie the weights of the embedding and projection layers, so that the projection weights are the transpose of the embedding weights (\citealp{weighttying}). The default hidden dimensionality is 128, so that the embedding layer maps a data set archetype's dimension to 128. The internal feed-forward blocks preserve this hidden dimension, and the final projection layer maps it back to the archetype's dimension.

\section*{Appendix F}

Below we list the prompt templates (including few-shot examples) used in Version 1.0.0 of \texttt{repliclust}. Up-to-date versions are available in the code base (see \texttt{repliclust.org}).

\vspace{5mm}

\noindent \textbf{1. Prompt template mapping archetype description to high-level geometric parameters:}

\begin{lstlisting}[basicstyle=\ttfamily\footnotesize, breaklines=true]
Your task is to turn a verbal description of a data set archetype from Repliclust into a precise JSON that specifies which parameter settings to use to create the desired data set archetype in Repliclust. These are the allowed parameters:

n_clusters: int >= 1, the number of clusters to generate
dim: int >= 2, the dimensionality of the data
n_samples: int >= 1, the number of data samples to generate
aspect_ref: float >= 1, the eccentricity of a typical cluster (how oblong vs spherical it is)
aspect_maxmin: float >= 1, how much the eccentricity varies across clusters in a data set
radius_maxmin: float >= 1, how much cluster radius (and thereby cluster volume) varies across the clusters
max_overlap: float > 0, the maximum allowed overlap between any pair of clusters (0.1-0.2 is significant overlap, 0.01-0.05 is little overlap, 0.001 is very little overlap, and 0.0001 and lower is well-separated)
min_overlap: float > 0, the minimum amount of overlap each cluster should have with some other cluster, preventing a cluster from being too far away from all other clusters
imbalance_ratio: float >= 1, specifies how imbalanced the number of data points per cluster is
distributions: list[str], determines the probability distributions to use for the clusters; the available distributions are 'normal', 'standard_t', 'exponential', 'beta', 'uniform', 'chisquare', 'gumbel', 'weibull', 'gamma', 'pareto', 'f', and 'lognormal'

IMPORTANT NOTES:
Any words like "separated", "far away", "close together", or "overlapping" refer to the overlap between clusters. Far apart means that max_overlap is 1e-4 or less
Always make min_overlap smaller than max_overlap, usually ten times smaller!
ONLY include the Pareto ('pareto') distribution if the user specifically asks for heavy tails!

EXAMPLES:

Description: five oblong clusters in two dimensions
Archetype JSON: {
  "n_clusters": 5,
  "dim": 2,
  "n_samples": 500,
  "aspect_ref": 3,
  "aspect_maxmin": 1.5,
}

Description: three spherical clusters with significant overlap in two dimensions
Archetype JSON: {
  "n_clusters": 3,
  "dim": 2,
  "n_samples": 300,
  "max_overlap": 0.2,
  "min_overlap": 0.1,
  "aspect_ref": 1.0,
  "aspect_maxmin": 1.0
}

Description: eight spherical clusters of different sizes with significant overlap in two dimensions
Archetype JSON: {
  "n_clusters": 8,
  "dim": 2,
  "n_samples": 800,
  "max_overlap": 0.25,
  "min_overlap": 0.1,
  "aspect_ref": 1.0,
  "aspect_maxmin": 1.0,
  "radius_maxmin": 2.0
}

Description: ten clusters which are all highly oblong (and equally so) but of very different sizes, with moderate overlap
Archetype JSON: {
  "n_clusters": 10,
  "n_samples": 1000,
  "aspect_ref": 5,
  "aspect_maxmin": 1.0,
  "max_overlap": 0.10,
  "min_overlap": 0.05,
  "radius_maxmin": 4.0
}

Description: five clusters with significant class imbalance
Archetype JSON: {
  "n_clusters": 5,
  "n_samples": 500,
  "imbalance_ratio": 5,
  "aspect_ref": 1.5,
  "aspect_maxmin": 1.5
}

Description: five clusters with perfect class balance
Archetype JSON: {
  "n_clusters": 5,
  "n_samples": 500,
  "imbalance_ratio": 1.0,
  "aspect_ref": 1.4,
  "aspect_maxmin": 1.6
}

Description: eight clusters of which 70% are exponentially distributed and 30% are normally distributed
Archetype JSON: {
  "n_clusters": 8,
  "n_samples": 800,
  "aspect_ref": 1.7,
  "aspect_maxmin": 1.5,
  "distributions": ["exponential", "normal"],
  "distribution_proportions": [0.7, 0.3],
}

Description: eight clusters with 1000 total samples of which half are exponentially distributed and half are normally distributed
Archetype JSON: {
  "n_clusters": 8,
  "n_samples": 1000,
  "aspect_ref": 1.7,
  "aspect_maxmin": 1.5,
  "distributions": ["exponential", "normal"],
  "distribution_proportions": [0.5, 0.5]
}

Description: two clusters of different sizes in 10 dimensions that are well-separated
Archetype JSON: {
  "n_clusters": 2,
  "dim": 10,
  "n_samples": 200,
  "aspect_ref": 2
  "aspect_maxmin": 2,
  "radius_maxmin": 4.0,
  "max_overlap": 0.001,
  "min_overlap": 0.0001
}

Description: very oblong clusters that overlap heavily
Archetype JSON: {
  "n_clusters": 6,
  "n_samples": 600,
  "aspect_ref": 7,
  "aspect_maxmin": 1.4,
  "max_overlap": 0.4,
  "min_overlap": 0.3
}

Description: highly separated and very oblong clusters
Archetype JSON: {
  "n_clusters": 4,
  "n_samples": 400,
  "aspect_ref": 6,
  "aspect_maxmin": 1.6,
  "max_overlap": 1e-4,
  "min_overlap": 1e-5
}

Description: ten clusters with very different shapes
Archetype JSON: {
  "n_clusters": 10,
  "n_samples": 1000,
  "aspect_ref": 1.5,
  "aspect_maxmin": 3.0,
  "radius_maxmin": 3.0
}

Description: twelve well-separated clusters with very different shapes
Archetype JSON: {
  "n_clusters": 12,
  "n_samples": 1200,
  "aspect_ref": 1.5,
  "aspect_maxmin": 5.0,
 "radius_maxmin": 5.0, 
 "max_overlap": 1e-4,
  "min_overlap": 1e-5
}}

Description: twelve highly separated Gaussian clusters with very different shapes
Archetype JSON: {
  "n_clusters": 12,
  "n_samples": 1200,
  "aspect_ref": 1.5,
  "aspect_maxmin": 5.0,
 "radius_maxmin": 5.0, 
 "max_overlap": 1e-4,
  "min_overlap": 1e-5
 "distributions": ["normal"]}}

Description: five heavy-tailed clusters
Archetype JSON: {
  "n_clusters": 5,
  "n_samples": 500,
  "aspect_ref": 1.5,
 "distributions": ["standard_t", "lognormal", "pareto"]}}

Description: clusters with holes
Archetype JSON: {"distributions": ["f"]}

Description: clusters from a variety of distributions
Archetype JSON: {"distributions": ["normal", "exponential", "gamma", "weibull", "lognormal"]}

Description: clusters from all different distributions
Archetype JSON: {"distributions": ['normal', 'standard_t', 'exponential', 'beta', 'uniform', 'chisquare', 'gumbel', 'weibull', 'gamma', 'f', and 'lognormal']}

Description: clusters from different distributions
Archetype JSON: {"distributions": ['normal', 'exponential', 'beta', 'uniform', 'chisquare', 'gumbel', 'weibull', 'gamma', 'f', and 'lognormal']}

Description: highly separated clusters from all different distributions but no heavy tails
Archetype JSON: {"max_overlap": 1e-4,
  "min_overlap": 1e-5,
 "distributions": ['normal', 'exponential', 'beta', 'uniform', 'chisquare', 'gumbel', 'weibull', 'gamma', 'f', and 'lognormal']}

Description: seven clusters with uniform distribution with light overlap
Archetype JSON: { "max_overlap": 0.025, 
"min_overlap": 0.0025,
 "distributions": ["uniform"]}

Description: clusters with bounded support
Archetype JSON: {"distributions": ["beta", "uniform"]}

Description: {description}
Archetype JSON:
\end{lstlisting}

\vspace{5mm}

\noindent \textbf{2. Prompt template mapping archetype description to a descriptive identifier:}

\begin{lstlisting}[basicstyle=\ttfamily\footnotesize, breaklines=true]
Your task is to turn a description of a data set archetype into an identifier for
the archetype. The identifier should be short yet descriptive, and not contain any whitespace
(but underscores are OK). IMPORTANT: the identifier should be a valid Python variable name.
Specifically, it may NOT start with a number, nor contain any special character except for
underscores.

EXAMPLES:

Description: five oblong clusters in two dimensions
Archetype identifier: five_oblong_2d

Description: three spherical clusters with significant overlap in two dimensions
Archetype identifier: three_spherical_significant_overlap_2d

Description: eight spherical clusters of different sizes with significant overlap in two dimensions
Archetype identifier: eight_spherical_different_sizes_significant_overlap_2d

Description: ten clusters which are all highly oblong (and equally so) but of very different sizes, with moderate overlap
Archetype identifier: ten_highly_oblong_very_different_shapes_moderate_overlap

Description: five clusters with significant class imbalance
Archetype identifier: five_significant_class_imbalance

Description: five clusters with perfect class balance
Archetype identifier: five_perfect_class_balance

Description: eight clusters of which 70% are exponentially distributed and 30% are normally distributed
Archetype identifier: eight_exp_and_norm

Description: eight clusters with 1000 total samples of which half are exponentially distributed and half are normally distributed
Archetype identifier: eight_exp_and_norm_1000_samples

Description: two clusters of different sizes in 10 dimensions that are well-separated
Archetype identifier: two_different_sizes_well_separated_10d

Description: very oblong clusters that overlap heavily
Archetype identifier: very_oblong_heavy_overlap

Description: ten clusters with very different shapes
Archetype identifier: ten_very_different_shapes

Description: {description}
Archetype identifier:
\end{lstlisting}

\end{document}